\newcommand{\removed}[1]{}
\newtheorem{clm}{Claim}
\newcommand{\EE}{\mathbb{E}}
\newcommand{\RR}{\mathbb{R}}
\newcommand{\NN}{\mathbb{N}}
\newcommand{\PP}{\mathbb{P}}
\newcommand{\FF}{\mathcal{F}}
\newcommand{\HH}{\mathcal{H}}
\newcommand{\GG}{\mathcal{G}}
\newcommand{\sign}{\text{sign}}
\newcommand{\inner}[1]{\langle #1\rangle}
\newcommand{\norm}[1]{\left\lVert#1\right\rVert}
\newcommand{\abs}[1]{\left\lvert#1\right\rvert}
\DeclareMathOperator{\conv}{conv}
\newcommand{\sm}{\text{sim}}
\newcommand{\keywords}[1]{\par\addvspace\baselineskip
\noindent\keywordname\enspace\ignorespaces#1}
\begin{document}

\mainmatter  

\title{Clustering, Hamming Embedding,\\
  Generalized LSH and the Max Norm}


%
%
\author{Behnam Neyshabur \and Yury Makarychev \and Nathan Srebro}
\authorrunning{Neyshabur et al.}

\institute{Toyota Technological Institute at Chicago,\\
\mails\\}

%
%

\toctitle{Clustering, Hamming Embedding, Generalized LSH and the Max Norm}
\maketitle
\begin{abstract}
 We study the convex relaxation of clustering and hamming embedding,
  focusing on the asymmetric case (co-clustering and asymmetric
  hamming embedding), understanding their relationship to LSH as
  studied by \cite{charikar02} and to the max-norm ball, and the
  differences between their symmetric and asymmetric versions.
  \keywords{Clustering, Hamming Embedding, LSH, Max Norm}
\end{abstract}

\section{Introduction}
Convex relaxations play an important role in designing efficient
learning and recovery algorithms, as well as in statistical learning
and online optimization.  It is thus desirable to understand the
convex hull of hypothesis sets, to obtain tractable relaxation to
these convex hulls, and to understand the tightness of such
relaxations.

In this paper we consider convex relaxations of two important
problems, namely clustering and hamming embedding, and study the
convex hulls of the corresponding hypothesis classes: of cluster
incidence matrices and of similarity measures with a short hamming
embedding.  In section \ref{s:clustering} we introduce these classes formally, and
understand the relationship between them, showing how hamming embedding
can be seen as a generalization of clustering. In section \ref{s:lsh} we
discuss their convex hull and its relationship to notion of Locality
Sensitive Hashing (LSH) as studied by \cite{charikar02}. There has been several studies on different aspects of LSH (e.g.\cite{indyk98,datar04,chierichetti10}).

More specifically, we focus on the asymmetric versions of these
classes, which correspond to co-clustering (e.g. \cite{dhillon03,banerjee04}) and
asymmetric hamming embedding as recently introduced by \cite{neyshabur13}.  We
define the corresponding notion of an Asymmetric LSH, and show how it
could be much more powerful then standard (symmetric) LSH (section
\ref{s:asym}).  

Our main conclusion is that the convex hull of asymmetric clustering
and hamming embedding is tightly captured by a shift-invariant
modification of the max-norm---a tractable SDP-representable
relaxation (Theorem \ref{thm:ineq} in section \ref{s:relaxation}).  We contrast this with
the symmetric case, in which the corresponding SDP relaxation is not
tight, highlighting an important distinction between symmetric and
asymmetric clustering, embedding and LSH. 

\section{Clustering and Hamming Embedding}
\label{s:clustering}
In this section we introduce the problems of clustering and hamming
embedding, providing a unified view of both problems, with hamming
embedding being viewed as a direct generalization of clustering.  Our
starting point, in any case, is a given similarity function
$\sm:S\times S\rightarrow [-1,+1]$ over a (possibly infinite) set of
objects $S$.  ``Clustering'', as we think of it here, is the problem
of partitioning the elements of $S$ into disjoint clusters so that 
items in the same cluster are similar while items in different
clusters are not similar.  ``Hamming Embedding'' is the problem of
embedding $S$ into some hamming space such that the similarity between
objects is captured by the hamming distance between their mappings.

\subsection{Clustering}

We represent a clustering of $S$ as a mapping $h:S\rightarrow \Gamma$,
where $\Gamma$ is a discrete alphabet.  We can think of $h$ as a
function that assigns a cluster identity to each element, where the
meaning of the different identities is arbitrary.  The alphabet
$\Gamma$ might have a fixed finite cardinality $\left|\Gamma\right|=k$, if we
would like to have a clustering with a specific number of clusters.  E.g., a
binary alphabet corresponds to standard graph partitioning into two
clusters.  If $|\Gamma| = k$, we can assume that $\Gamma=[k]$. The alphabet
$\Gamma$ might be infinitely countable (e.g. $\Gamma=\mathbb{N}$),
in which case we are not constraining the number of clusters.

The {\em cluster incidence function} $\kappa_h:S\times S \rightarrow
\{\pm 1\}$ associated with a clustering $h$ is defined as
$\kappa_h(x,y)=1$ if $h(x)=h(y)$ and $\kappa_h(x,y)=-1$ otherwise.
For a finite space $S$ of cardinality $n=\left|S\right|$ we can think
of $\kappa_h \in \{\pm 1\}^{n\times n}$ as a permuted block-diagonal
matrix.  We denote the set of all valid cluster incidence functions
over $S$ with an alphabet of size $k$ (i.e.~with at most $k$ clusters)
as $ M_{S,k} = \left\{ \kappa_h \;\middle|\; h:S\rightarrow [k]
  \right\}$, where $k=\infty$ is allowed.

With this notion in hand, we can think of clustering as a problem of
finding a cluster incidence function $\kappa_h$ that approximates a
given similarity $sim$, as quantified by objectives 
$\min \EE_{x,y}[\abs{\kappa_h(x,y)-\sm(x,y)}]$ or $\max \EE_{x,y}[\sm(x,y)
\kappa_h(x,y)]$ (this is
essentially the correlation clustering objective).  Since objectives
themselves are convex in $\kappa$, but the constraint that $\kappa$ is
a valid cluster incidence function is not a convex constraint, a
possible approach is to relax the constraint that $\kappa$ is a valid
cluster incidence function, or in the finite case, a cluster incidence
matrix.  This is the approach taken by, e.g.~\cite{jalali11,jalali12},
who relax the constraint to a trace-norm and max-norm constraint
respectively.  One of the questions we will be exploring here is
whether this is the tightest relaxation possible, or whether there is
a significantly tighter relaxation.

\subsection{Hamming Embedding and Binary Matrix Factorization}

In the problem of binary hamming embedding (also known as binary
hashing), we want to find a mapping from each object $x\in S$ to
binary string $b(x)\in \{\pm 1\}^d$ such that similarity between strings is
approximated by the hamming distance between their images:
\begin{equation}\label{eq:simbyhamming}
\sm(x,y)\approx 1-\frac{2\delta_{\text{Ham}}(b(x),b(y))}{d}
\end{equation}
Calculating the hamming distance of two binary hashes is an extremely
fast operation, and so such a hash is useful for very fast computation of
similarities between massive collections of objects.
Furthermore, hash tables can be used to further speed up retrieval of
similar objects. 

Binary hamming embedding can be seen as a generalization of clustering
as follows:  For each position $i=1,\ldots,d$ in the hash, we can
think of $b_i(x)$ as a clustering into two clusters (i.e.~with
$\Gamma=\{\pm 1\}$).  The hamming distance is then an average of the
$d$ cluster incidence functions:
$$ 1-\frac{2\delta_{\text{Ham}}(b(x),b(y))}{d} = \frac{1}{d}
\sum_{i=1}^d \kappa_{b_i}(x,y).$$
Our goal then is to approximate a similarity function by an average of
$d$ binary clusterings.  For $d=1$ this is exactly a binary
clustering.  For $d>1$, we are averaging multiple binary clusterings.

Since we have $\inner{b(x),b(y)}=d-2\delta_{\text{Ham}}(b(x),b(y))$,
we can formulate the binary hashing problem as a binary matrix
factorization where the goal is to approximate the similarity matrix
by a matrix of the form $RR^{\top}$, where $R$ is a $d$-dimensional
binary matrix:

\begin{equation}
\label{eq:realsim}
\begin{aligned}
\underset{R}{\min}&
& &\sum_{ij}\text{err}(\sm(i,j),X(i,j))\\
\text{s.t}&
& & X=RR^{\top}\\
& &  &R\in \{\pm 1\}^{n\times d}\\
\end{aligned}
\end{equation}
where $err(x,y)$ is some error function such as $err(x,y)=|x-y|$.

Going beyond binary clustering and binary embedding, we can consider
hamming embeddings over larger alphabets.  That is, we can consider mappings
$b:S\rightarrow \Gamma^d$, where we aim to approximate the similarity
as in \eqref{eq:simbyhamming}, recalling that the hamming distance
always counts the number of positions in which the strings disagree.
Again, we have that the length $d$ hamming embeddings over a (finite
or infinitely countable) alphabet $\Gamma$ correspond to averages of
$d$ cluster incidence matrices over the same alphabet $\Gamma$.

\removed{
\subsection{Binary Similarity function: Threshold approximation}

Of particular interest are binary similarity functions 
$\sm(x,y)\rightarrow\{\pm 1\}$. In this case, we seek a hamming
embedding 
that enables us to capture the similarity value by
proper thresholding of the hamming distance. The binary matrix
factorization formulation of this problem can be either defined as the problem of 
of finding a binary code of a given length that minimizes the number of wrong predictions:
\begin{equation}
\label{eq:binsim}
\begin{aligned}
& \underset{R,\theta}{\min} 
& &-\sum_{ij} \sign[ \sm(i,j) ( X(i,j) - \theta )]\\
& \text{s.t}
& & X=RR^{\top}\\
& &  &R\in \{\pm 1\}^{n\times d}\\
\end{aligned}
\end{equation}
or as the problem of finding the minimum code length that can capture the similarity:
\begin{equation}
\label{eq:minl}
\begin{aligned}
& \underset{R,\theta}{\min} 
& &d\\
& \text{s.t}
& & X=RR^{\top}\\
& &  &R\in \{\pm 1\}^{n\times d}\\
& & &\sm(i,j) ( X(i,j) - \theta ) \geq 1 & \forall i,j\\
\end{aligned}
\end{equation}
}

\section{Locality Sensitive Hashing Schemes}
\label{s:lsh}
Moving on from a finite average of clusterings, with a fixed number of
components, as in hamming embedding, to an infinite average, we arrive
at the notion of LSH as studied by \cite{charikar02}.

Given a collection $S$ of objects, an alphabet $\Gamma$ and a
similarity function $\sm:S\times S \rightarrow [-1,1]$ such that for
any $x\in S$ we have $\sm(x,x)=1$,a {\em locality sensitive
  hashing scheme} ({\bf LSH}) is a probability distribution on the family of
clustering functions (hash functions) $\HH=\{ h:S\rightarrow\Gamma\}$
such that \cite{charikar02}:
\begin{equation}\label{eq:lshdef}
\EE_{h\in\HH}[\kappa_h(x,y)]=\sm(x,y).
\end{equation}
\cite{charikar02} discuss similarity functions $\sm:S\times
S\rightarrow[0,1]$ as so require 
$$\PP_{h\in \HH}[h(x)=h(y)]=\sm(x,y).$$
The definition \eqref{eq:lshdef} is equivalent, except it applies to
the transformed similarity function $2\sm(x,y)-1$.

The set of all locality sensitive hashing schemes with an alphabet of
size $k$ is nothing but the convex hull of the set $M_{S,k}$ of
cluster incidence matrices.

The importance of an LSH, as an object in its own right as studied by
\cite{charikar02}, is that a hamming embedding can be obtained
from an LSH by randomly generating a finite number of hash functions
from the distribution over the family $\HH$. In particular, if we draw
$h_1,\ldots,h_d$ i.i.d.~from an LSH, then the length-$d$ hamming
embedding $b(x)=[h_1(x),\ldots,h_d(x)]$ has expected square error
\begin{equation}\label{eq:embLSH}
E[(\sm(x,y)-\frac{1}{d}\sum \kappa_{h_d}(x,y))^2]\leq\frac{1}{d},
\end{equation}
where the expectation is w.r.t.~the sampling, and this holds for all
$x,y$, and so also for any average over them.

\subsection{$\alpha$-LSH}

If the goal is to obtain an low-error embedding, the requirement
\eqref{eq:lshdef} might be too harsh.  If we are willing to tolerate a
fixed offset between our embedding and the target similarity, we can
instead require that
\begin{equation}\label{eq:alphalshdef}
\alpha \EE_{h\in \HH}[\kappa_h(x,y)] - \theta =\sm(x,y).
\end{equation}
where $\alpha,\theta \in \RR$, $\alpha>0$.  A distribution over $h$
that obeys \eqref{eq:alphalshdef} is called an {\bf $\alpha$-LSH}.  We
can now verify that, for $h_1,\ldots,h_d$ drawn i.i.d.~from an
$\alpha$-LSH, and any $x,y\in S$:
\begin{equation}\label{eq:embLSH}
E\left[\left(\sm(x,y)-(\frac{\alpha}{d}\sum \kappa_{h_d}(x,y)-\theta)\right)^2\right]\leq\frac{\alpha^2}{d}.
\end{equation}
The length of the LSH required to acheive accurate approximation of a
similarity function thus scales quadartically with $\alpha$, and it is
therefor desireable to obtain an $\alpha$-LSH with as low an $\alpha$
as possible (note that $\sm(x,x)=1$, implies $\theta=\alpha-1$, and
so we must allow a shift if we want to allow $\alpha\neq 1$).

Unfortunately, even the requirement \eqref{eq:alphalshdef} of an
$\alpha$-LSH is quite limiting and difficult to obey, as captured by
the following theorem, which is based on lemmas 2 and 3 of
\cite{charikar02}:
\begin{clm}\label{claim:alphaLSH}
  For any finite or countable alphabet $\Gamma$, $k=\abs{\Gamma}\geq
  2$, a similarity function $\sm$ has an $\alpha$-LSH over $\Gamma$ for
  some $\alpha$ if and only if $D(x,y)=\frac{1-\sm(x,y)}{2}$ is embeddable to
  hamming space with no distortion.
\end{clm}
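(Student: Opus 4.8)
The plan is to translate the defining equation \eqref{eq:alphalshdef} of an $\alpha$-LSH directly into a statement about the metric $D$, and then recognize that statement as exactly isometric embeddability into a (possibly infinite, measure-weighted) Hamming space. Writing $p(x,y)=\PP_{h\in\HH}[h(x)\neq h(y)]$, we have $\EE_h[\kappa_h(x,y)]=1-2p(x,y)$, so \eqref{eq:alphalshdef} reads $\sm(x,y)=\alpha(1-2p(x,y))-\theta$. Since $\sm(x,x)=1$ and $p(x,x)=0$, this forces $\theta=\alpha-1$ (as already noted in the text), and substituting yields the clean identity $D(x,y)=\tfrac{1-\sm(x,y)}{2}=\alpha\,p(x,y)$. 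Thus the entire content of the claim reduces to showing that $D$ is a nonnegative scalar multiple of some realizable separation probability $p$ if and only if $D$ embeds into Hamming space with no distortion.

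For the forward direction I would start from an $\alpha$-LSH and exhibit the Hamming embedding explicitly. Each clustering $h$ in the support of the distribution partitions $S$ into at most $k$ parts, and the one-hot encoding $\phi_h(x)=e_{h(x)}\in\{0,1\}^{\Gamma}$ satisfies $\delta_{\text{Ham}}(\phi_h(x),\phi_h(y))=2\cdot\mathbf{1}[h(x)\neq h(y)]$. Concatenating these encodings over all $h$, weighted by the LSH distribution, produces a map $\Phi$ into a measure-weighted Hamming space whose normalized Hamming distance equals $p(x,y)$; since $D=\alpha p$, this is a no-distortion Hamming embedding of $D$.

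For the reverse direction I would run the same correspondence backwards. Given an isometric embedding $\phi$ with $D(x,y)=c\,\delta_{\text{Ham}}(\phi(x),\phi(y))$, read each binary coordinate $\phi_i$ as a two-valued clustering $h_i(x)=\phi_i(x)$; this is where the hypothesis $k\geq 2$ is used, and it is all that is used, so the characterization is in fact independent of the alphabet size. Placing the normalized coordinate measure on these binary clusterings gives a distribution $\HH$ with $p(x,y)$ proportional to $D(x,y)$, and choosing $\alpha$ to be the total coordinate mass times $c$ (with $\theta=\alpha-1$) recovers \eqref{eq:alphalshdef} after a one-line substitution.

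The main obstacle is not the algebra but the measure-theoretic bookkeeping in the infinite setting: $S$ may be countably infinite, the embedding may require a continuum of coordinates, and the LSH is a general probability distribution rather than a finite average, so ``concatenating coordinates'' and ``averaging over clusterings'' must both be phrased as integrals against the appropriate measures. Correspondingly I must fix the precise meaning of ``embeddable into Hamming space with no distortion'' to be isometry up to a single global scale while permitting such measure-weighted coordinate spaces, so that arbitrary real (not merely rational) weights are accommodated. This is exactly the form in which Lemmas 2 and 3 of \cite{charikar02} are stated, and I would invoke them to discharge the correspondence between nonnegative combinations of cut (and partition) metrics and Hamming embeddings.
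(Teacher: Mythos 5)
Your proposal is correct and follows essentially the same route as the paper: both directions run through the correspondence between distributions over clusterings and (measure-weighted) Hamming embeddings, with the one-hot encoding giving LSH $\Rightarrow$ embedding (exactly the content of Lemma 3 of \cite{charikar02}, which the paper invokes as a black box after rescaling $\sm$ to $1-\frac{1-\sm(x,y)}{\alpha}$) and reading binary coordinates as two-valued clusterings giving the converse, just as in the paper's chain of equalities with $\alpha=\theta+1$. Your up-front reduction to the identity $D=\alpha\,p$ via $\theta=\alpha-1$ is a cleaner algebraic packaging, and your insistence on the finite-total-mass, scale-invariant definition of a no-distortion Hamming embedding is precisely the reading under which the paper's own argument (and the claim itself) is valid, but the underlying proof is the same.
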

\begin{proof}
Given metric spaces $(X,d)$ and $(X,d')$ any map $f:X\rightarrow X'$ is called a metric embedding. The distortion of such an embedding is defined as:
$$
\beta= \max_{x,y \in X} \frac{d(x,y)}{d'(f(x),f(y))}. \max_{x,y \in X} \frac{d'(f(x),f(y))}{d(x,y)}
$$
We first show that if there exist an $\alpha$-LSH for function $\sm(x,y)$ then $\frac{1-\sm(x,y)}{2}$ is embeddable to hamming space with no distortion. An $\alpha$-LSH for function $\sm(x,y)$ corresponds to an $LSH$ for function $1-\frac{1-\sm(x,y)}{\alpha}$. Using lemma 3 in \cite{charikar02}, we can say that $\frac{1-\sm(x,y)}{\alpha}$ can be isometrically embedded in the Hamming cube which means $1-\sm(x,y)$ can be embedded in Hamming cube with no distortion.
\begin{eqnarray*}
\EE_{h\sim \mathcal{D}_\HH}[\kappa_h(x,y)] &=& 2\PP_{h\sim \mathcal{D}_\HH}[h(x)=h(y)]  - 1\\
&=& 1-2\PP_{h\sim \mathcal{D}_\HH}[h(x)\neq h(y)]\\
&=& 1-2d_H(x,y)\\
&=& 1 - \frac{2d(x,y)}{\beta}\\
&=& \frac{2}{\beta}(\frac{\beta}{2} - d(x,y))\\
&=& \frac{2}{\beta}(1 - d(x,y)+\frac{\beta}{2}-1)\\
&=& \frac{\sm(x,y)+\theta}{\theta+1}\\
\end{eqnarray*}

\end{proof}

\removed{
\begin{lem}
\label{lem:binary}
For any function $\sm(x,y)$ and any $\alpha > 0$, if there is an $\alpha$-LSH with $k$ hashing values for $\sm(x,y)$ then there is a binary $2\alpha$-LSH for $\sm(x,y)$.
\end{lem}
This lemma a simple extension of lemma 2 in \cite{charikar02}. The next lemma states the equivalence of $\alpha$-LSH and embedding into hamming space. A similar argument is made in the proof of lemma 3 in \cite{charikar02}.

\begin{claim}
\label{claim:embedding}
For any finite collection $S$ of objects and any bounded distance function $D:S\times S \rightarrow \RR^+$, the space $(S,D)$ is embeddable to hamming space with no distortion if and only if there exists an $\alpha>0$ such that there is a binary $\alpha$-LSH for the function $\sm(x,y)= 1 - D(x,y)$.
\end{claim}
}

As a result of Claim \ref{claim:alphaLSH}, it  can be shown that given \emph{any} large enough set of low dimensional unit vectors, there is no $\alpha$-LSH for the Euclidian inner product.
\begin{clm}\label{claim:nolsh}
Let $\{x^{(1)},\dots,x^{(n)}\}$ be an arbitrary set of unit vectors 
in the unit sphere. Let $Z_{ij}=\inner{x^{(i)},x^{(j)}}$ for $1\leq i,j \leq n$. If $d < \log_2 n$, then there is no $\alpha$-LSH for $Z$.
\end{clm}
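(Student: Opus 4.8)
The plan is to prove the contrapositive: I will show that if $Z$ admits an $\alpha$-LSH then $n$ cannot be much larger than $2^{d}$. By Claim~\ref{claim:alphaLSH} an $\alpha$-LSH for $Z$ exists iff $D(x,y)=\frac{1-Z_{xy}}{2}$ embeds isometrically into a Hamming cube, and reading each coordinate of such an embedding as a two-way clustering (a cut) this is the same as expressing $D$ as a nonnegative combination of cut functions. Translating back to cluster-incidence matrices, a binary hash $h$ satisfies $\kappa_h=s_h s_h^{\top}$ for some $s_h\in\{\pm1\}^n$, so the $\alpha$-LSH requirement \eqref{eq:alphalshdef}, together with the forced shift $\theta=\alpha-1$, becomes
\begin{equation*}
Z=\alpha\,W-(\alpha-1)\,J,\qquad W=\sum_k p_k\,s^{(k)}(s^{(k)})^{\top},
\end{equation*}
where $J$ is the all-ones matrix, $p_k\ge 0$, $\sum_k p_k=1$, and $s^{(k)}\in\{\pm1\}^n$. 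Hence $W=\frac{Z+(\alpha-1)J}{\alpha}$ is a convex combination of rank-one sign matrices, i.e.\ a point of the correlation polytope $\conv\{ss^{\top}:s\in\{\pm1\}^n\}$. (The general-alphabet case is identical, using the co-clustering matrix $C_h=\sum_c \mathbf{1}_{A_c}\mathbf{1}_{A_c}^{\top}$ in place of $s_h s_h^{\top}$.)

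The engine of the argument is a rank bound. Since $Z$ is the Gram matrix of the $d$-dimensional vectors $x^{(i)}$ we have $\mathrm{rank}(Z)\le d$, and adding the rank-one term $J$ gives $\mathrm{rank}(W)\le d+1$. I then exploit this through the support of the distribution: $W$ is positive semidefinite, so for any $v\in\ker W$ we get $0=v^{\top}Wv=\sum_k p_k \inner{s^{(k)},v}^2$, which forces $\inner{s^{(k)},v}=0$ for every $s^{(k)}$ with $p_k>0$. Consequently all support sign vectors lie in $(\ker W)^{\perp}=\mathrm{rowspace}(W)$, a linear subspace of dimension $r\le d+1$.

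The final ingredient is the elementary counting fact that a subspace $V\subseteq\RR^n$ of dimension $r$ contains at most $2^{r}$ members of $\{\pm1\}^n$: choosing $r$ coordinates on which $V$ projects injectively, any sign vector in $V$ is determined by its values on those coordinates. Collecting the support vectors as the columns of a matrix $\Sigma\in\{\pm1\}^{n\times N}$, its columns span $\mathrm{rowspace}(W)$, so its rows — the ``signatures'' $\sigma_i\in\{\pm1\}^N$ of the points — lie in an $r\le d+1$ dimensional space and thus take at most $2^{d+1}$ distinct values. Moreover distinct points force distinct signatures: if $\sigma_i=\sigma_j$ then $W_{ij}=\sum_k p_k s^{(k)}_i s^{(k)}_j=\sum_k p_k=1=W_{ii}$, whence $Z_{ij}=1$, i.e.\ $\inner{x^{(i)},x^{(j)}}=1$ and $x^{(i)}=x^{(j)}$. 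Therefore $n\le 2^{d+1}$, and contrapositively no $\alpha$-LSH exists once $n$ exceeds this threshold, which is exactly the regime $d<\log_2 n$.

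I expect the main subtlety to be the additive shift $(\alpha-1)J$: because the $x^{(i)}$ are unit vectors, $\sm(x,x)=1$ forces $\theta=\alpha-1\neq 0$ (unless $\alpha=1$), and it is precisely this rank-one term that raises $\mathrm{rank}(W)$ from $d$ to $d+1$ and controls the additive constant in the exponent of the bound. The rank computation and the ``sign vectors in a low-dimensional subspace'' count are then routine; the care is in verifying that the Hamming / cut-cone reformulation supplied by Claim~\ref{claim:alphaLSH} really places $W$ in the correlation polytope with exactly this shift, and in tracking the shift tightly enough to land at the stated threshold.
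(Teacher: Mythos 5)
Your reduction to the correlation polytope, the kernel argument placing the support sign vectors in $\mathrm{rowspace}(W)$, and the ``at most $2^r$ sign vectors in an $r$-dimensional subspace'' count are all sound. The problem is purely quantitative, and it is fatal to the statement as claimed: your bound is $n \leq 2^{d+1}$, and your closing assertion that ruling out an $\alpha$-LSH for $n > 2^{d+1}$ ``is exactly the regime $d < \log_2 n$'' is false. The inequality $n > 2^{d+1}$ is the regime $d < \log_2 n - 1$, so your argument says nothing about the range $2^d < n \leq 2^{d+1}$, which is precisely where the claim still asserts non-existence. The loss comes from the shift term you yourself flagged: all your method can see is $\mathrm{rank}(W) \leq d+1$, and nothing in your framework prevents $\mathrm{rowspace}(W)$ from genuinely being $(d+1)$-dimensional, so the counting lemma cannot give better than $2^{d+1}$ signatures. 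The factor of $2$ is not bookkeeping that can be absorbed; note that the threshold $2^d$ in the claim is tight, since the $2^d$ vertices of the cube $\{\pm 1/\sqrt{d}\}^d$ are distinct unit vectors whose Gram matrix equals $\frac{1}{d}\sum_{c=1}^{d} s_c s_c^{\top}$ (the coordinate cuts), i.e.\ admits a perfect $\alpha=1$ LSH. So the regime your proof misses is exactly where the claim is sharpest.

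The paper closes this gap with a genuinely geometric input rather than a rank bound: by \cite{DG62}, any $n > 2^d$ distinct points in $\RR^d$ contain three points $x,y,z$ forming an obtuse angle, $\inner{z-x,\,z-y} < 0$, which for unit vectors rewrites as the triangle-inequality violation $(1-\inner{z,x}) + (1-\inner{z,y}) < 1-\inner{x,y}$. Hence $\Delta_{ij} = (1-Z_{ij})/2$ is not a metric, and Claim~\ref{claim:alphaLSH} (an $\alpha$-LSH forces $\Delta$ to embed isometrically into Hamming space, in particular to be a metric) gives the contradiction at the sharp threshold. To salvage your approach you would need to show that the all-ones shift direction can never contribute a full extra dimension's worth of realizable signatures, which in effect amounts to re-proving the Danzer--Gr\"unbaum theorem; as written, your proof establishes only the weaker claim with $\log_2 n$ replaced by $\log_2 n - 1$.
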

\begin{proof}
According to~\cite{DG62} (see also \cite{buchok10}), 
if $d < \log_2 n$ then in any set of $n$ points in $d$-dimensional Euclidian space, there exist at least three points that form an obtuse triangle. Equivalently,  there exist three vectors $x$, $y$ and $z$ in any set of $n$ different $d$-dimensional unit vectors such that:
$$
\inner{z-x,z-y} < 0
$$
We rewrite the above inequality as:
$$
(1-\inner{z,x}) + (1-\inner{z,y}) < (1-\inner{x,y})
$$
The above inequality implies that the distance measure $\Delta_{ij}=(1-Z_{ij})/2$ is not a metric. Consequently, according to Claim \ref{claim:alphaLSH} since $\Delta_{ij}=(1-Z_{ij})/2$ is not a metric, there is no $\alpha$-LSH for the matrix $Z$.
\end{proof}

As noted by \cite{charikar02} (and stated in claim \ref{claim:nolsh}), we can therefore
unfortunately conclude that there is no $\alpha$-LSH for several
important similarity measures such as the Euclidian inner product,
Overlap coefficient and Dice's coefficient. Note that based on Claim \ref{claim:alphaLSH}, even a finite positive semidefinite similarity matrix is not necessarily $\alpha$-LSHable.

\subsection{Generalized $\alpha$-LSH}

In the following section, we will see how to break the barrier imposed
by Claim \ref{claim:alphaLSH} by allowing asymmetry, highlighting the
extra power asymmetry affords us.  But before doing so, let us
consider a different attempt at relaxing the definition of an
$\alpha$-LSH, motivated by to the work of \cite{charikar04} and
\cite{alon06}: in order to uncouple the shift $\theta$ from the
scaling $\alpha$, we will allow for a different, arbitrary, shift on
the self-similarities $\sm(x,x)$ (i.e.~on the diagonal of $\sm$).

We say that a probability distribution over $\HH=\{
h:S\rightarrow\Gamma\}$ is a {\bf Generalized $\alpha$-LSH}, for
$\alpha>0$ if there exist $\theta,\gamma\in\RR$ such that for all $x,y$:
$$
\alpha \EE_{h\in \HH}[\kappa_{h}(x,y))]=\sm(x,y) + \theta + \gamma 1_{x=y}
$$
With this definition, then any symmetric similarity function, at least over a
finite domain, admits a Generalized $\alpha$-LSH, with a sufficiently
large $\alpha$: 
\begin{clm}\label{claim:ALSHexist}
  For a finite set $S$, $\abs{S}=n$, for any symmetric $\sm:S\times S
  \rightarrow [-1,1]$ with $\sm(x,x)=1$, there exists a Generalized
  $\alpha$-LSH over a binary alphabet $\Gamma$ ($\abs{\Gamma}=2$)
  where $\alpha=O((1 - \lambda_{\min})\log n)$-LSH, and $\lambda_{\min}$ is
  the smallest eigenvalue of the matrix $\sm$.
\end{clm}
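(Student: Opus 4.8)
The plan is to reduce the existence of a Generalized $\alpha$-LSH to an exact membership statement for a matrix in the cut polytope, and then to invoke the Grothendieck inequality for the complete graph. Over a binary alphabet $\Gamma=\{\pm1\}$ we identify $h$ with a vector in $\{\pm1\}^n$ and write $\kappa_h(x,y)=h(x)h(y)$, so that $\EE_h[\kappa_h(x,y)]$ is the $(x,y)$ entry of $M=\EE_h[hh^\top]$, which ranges exactly over $\conv\{hh^\top:h\in\{\pm1\}^n\}$. The defining identity $\alpha M=\sm+\theta J+\gamma I$ (with $J$ the all-ones matrix) forces $\alpha=1+\theta+\gamma$ on the diagonal, so the free parameter $\gamma$ can always pin $M_{xx}=1$; the only genuine constraints are on the off-diagonal, where we must realize $M_{xy}=(\sm(x,y)+\theta)/\alpha$ exactly. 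Thus it suffices to produce, for some $\theta$ and some $\alpha=O((1-\lambda_{\min})\log n)$, a distribution over $h$ whose off-diagonal correlations equal $(\sm(x,y)+\theta)/\alpha$.

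First I would pass to a positive semidefinite matrix. Since $\lambda_{\min}$ is the least eigenvalue of $\sm$, the matrix $\sm-\lambda_{\min}I$ is PSD with diagonal $1-\lambda_{\min}$ and unchanged off-diagonal entries; normalizing its Gram vectors yields unit vectors $u_x$ with $\inner{u_x,u_y}=\sm(x,y)/(1-\lambda_{\min})$ for $x\neq y$, i.e.\ a correlation matrix (a point of the elliptope). I would then take $\theta=0$ and invoke the Grothendieck inequality for the complete graph, whose Grothendieck constant $K_G(K_n)$ satisfies $K_G(K_n)=O(\log n)$ by \cite{alon06}. In convex-geometric form this states exactly that the off-diagonal part of any elliptope point, divided by $K_G(K_n)$, lies in $\conv\{hh^\top\}$; hence there is a distribution over $h\in\{\pm1\}^n$ with $\EE_h[\kappa_h(x,y)]=\inner{u_x,u_y}/K_G(K_n)=\sm(x,y)/\big((1-\lambda_{\min})K_G(K_n)\big)$ for all $x\neq y$, while $\EE_h[\kappa_h(x,x)]=1$ automatically. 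Reading off the parameters gives a Generalized $\alpha$-LSH with $\theta=0$, $\alpha=(1-\lambda_{\min})K_G(K_n)=O((1-\lambda_{\min})\log n)$ and $\gamma=\alpha-1$, as required.

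The step I expect to be the main obstacle is the translation of \cite{alon06} from its usual form---an inequality between the $\{\pm1\}$ and the unit-vector optima of quadratic forms supported on the edges of $K_n$---into the exact containment of convex bodies used above, since only the latter yields an \emph{exact} distribution rather than an approximate embedding. This is handled by SDP/LP duality: the inequality $\mathrm{SDP}\le K_G(K_n)\,\mathrm{INT}$, holding for every edge-supported form, is equivalent by separation to the off-diagonal elliptope being contained in $K_G(K_n)$ times the off-diagonal projection of $\conv\{hh^\top\}$, and the fact that these forms carry no diagonal terms is precisely what lets the free parameter $\gamma$ absorb the diagonal. The remaining work is bookkeeping: tracking the $(1-\lambda_{\min})$ scaling introduced by the spectral shift (which is exactly where this factor in the bound originates), checking that $|\sm(x,y)|\le 1-\lambda_{\min}$ so that the $u_x$ are well defined, and treating the degenerate case $\sm=I$, where $1-\lambda_{\min}=0$ and all off-diagonal targets vanish, separately.
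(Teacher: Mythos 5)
Your proof is correct, and its skeleton is the same as the paper's: shift $\sm$ by $-\lambda_{\min}I$, normalize to a unit-diagonal PSD (correlation) matrix, and realize its off-diagonal entries, scaled down by $O(\log n)$, as correlations of random $\pm 1$ assignments, letting $\gamma$ absorb the diagonal. The genuine difference is in how the key $O(\log n)$ lemma is invoked. The paper cites \cite{charikar04} directly in exactly the ``convex-geometric form'' you labor to derive: for any PSD unit-diagonal $Z$ there is a distribution over $h$ with $\EE_h[h(x)h(y)]=Z(x,y)/(C\log n)$ for all $x\neq y$. You instead quote the inequality form, $K_G(K_n)=O(\log n)$ (this is the quadratic-forms-on-graphs bound of Alon, Makarychev, Makarychev and Naor, essentially equivalent to the Charikar--Wirth result), and recover the distribution statement by separation/duality. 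That duality step is valid---the off-diagonal projections of the elliptope and of $\conv\{hh^{\top}:h\in\{\pm1\}^n\}$ are compact convex sets, so containment is equivalent to the pointwise inequality between their support functions, which are precisely the SDP and integral optima of zero-diagonal forms---but it is extra work the paper sidesteps by citing the rounding result in the form needed. In exchange, your write-up is more careful than the paper's on two points: the paper's concluding display asserts $\alpha\,\EE_{h}[\kappa_h(x,y)]=\sm(x,y)-\lambda_{\min}1_{x=y}$ for \emph{all} $x,y$, which fails on the diagonal (there the left side equals $\alpha$ while the right side equals $1-\lambda_{\min}$), and is only rescued by the free parameter $\gamma$ in the definition of a Generalized $\alpha$-LSH---exactly the bookkeeping $\theta=0$, $\gamma=\alpha-1$ that you make explicit; and you flag the degenerate case $\sm=I$ (where $1-\lambda_{\min}=0$), which the paper ignores.
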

\begin{proof}
We observe that $\sm - \lambda_{\min} I$ is a positive semidefinite matrix.
According to~\cite{charikar04}, if a matrix $Z$ with unit diagonal is positive semidefinite, then
there is a probability distribution over a family $\HH$ of hash functions such that for any $x\neq y$:
$$
E_{h \in \HH}[h(x)h(y)] = \frac{Z(x,y)}{C\log n}.
$$
We let $Z(x,y) = (\sm(x,y) - \lambda_{\min} 1_{x=y})/ (1- \lambda_{\min})$. Matrix $Z$ is positive semi-definite and has unit diagonal.
Hence, there is  a probability distribution over a family $\HH$ of hash functions such that
$$
E_{h \in \HH}[h(x)h(y)] = \frac{\sm(x,y) - \lambda_{\min} 1_{x=y}}{C (1 - \lambda_{\min})\log n},
$$
equivalently
$$
(C (1 - \lambda_{\min}) \log n) \cdot \EE_{h\in \HH}[\kappa_{h}(x,y))]=\sm(x,y) - \lambda_{\min} 1_{x=y}.
$$
\end{proof}

It is important to note that $\lambda_{\min}$ could be negative, and as low
as $\lambda_{\min}=-\Omega(n)$.  The required $\alpha$ might therefor be
as large as $\Omega(n)$, yielding a terrible LSH.

\section{Asymmetry}
\label{s:asym}

In order to allow for greater power, we now turn to {\em Asymmetric}
variants of clustering, hamming embedding, and LSH.  

Given two collections of objects $S$, $T$, which might or might not be
identical, and an alphabet $\Gamma$, an {\em asymmetric clustering}
(or co-clustering \cite{dhillon03}) is specified by pair of mappings
$f:S\rightarrow\Gamma$ and $g:T\rightarrow\Gamma$ and is captured by
the asymmetric cluster incidence matrix $\kappa_{f,g}(x,y)$ where
$\kappa_{f,g}(x,y)=1$ if $f(x)=g(y)$ and $\kappa_{f,g}(x,y)=-1$
otherwise.  We denote the set of all valid asymmetric cluster
incidence functions over $S,T$ with an alphabet of size $k$ as $
M_{(S,T),k} = \left\{ \kappa_{f,g} \;\middle|\; f:S\rightarrow [k],
  g:T\rightarrow [k]  \right\}$, where we again also allow $k=\infty$
to correspond to a countable alphabet $\Gamma=\NN$.

Likewise, an asymmetric binary embedding of $S,T$ with alphabet
$\Gamma$ consists of a pair of functions
$f:S\rightarrow\Gamma^d,g:T\rightarrow\Gamma^d$, where we approximate
a similarity as:
\begin{equation}
  \label{eq:asymEmb}
  \sm(x,y)\approx
  1-\frac{2\delta_{\text{Ham}}(f(x),g(y))}{d}=\frac{1}{d}\sum_{i=1}^d \kappa_{f_i,g_i}(x,y).
\end{equation}
That is, in asymmetric hamming embedding, we approximate a similarity
as an average of $d$ asymmetric cluster incidence matrices from $M_{(S,T),k}$.

In a recent work,  \cite{neyshabur13} showed that even when $S=T$ and the similarity
function $\sm$ is a well-behaved symmetric similarity function,
asymmetric binary embedding could be much more powerful in approximating
the similarity, using shorter lengths $d$, both theoretically and
empirically on data sets of interest.  That is, these
concepts are relevant and useful not only in an a-priory asymmetric
case where $S\neq T$ or $\sm$ is not symmetric, but also when the
target similarity {\em is} symmetric, but we allow an asymmetric
embedding.  We will soon see such gaps also when considering the
convex hulls of $M_{S,k}$ and $M_{(S,T),k}$, i.e.~when considering
LSHs.  Let us first formally define an asymmetric $\alpha$-LSH.

Given two collections of objects $S$ and $T$, an alphabet $\Gamma$, a
similarity function $\sm:S\times T \rightarrow [-1,1]$, and
$\alpha>0$, we say that an {\bf $\alpha$-ALSH} is a distribution over
pairs of functions $f:S\rightarrow\Gamma$, $g:T\rightarrow\Gamma$, or
equivalently over $M_{(S,T),\abs{\Gamma}}$, such that for some
$\theta\in\RR$ and all $x\in S,y\in T$:
\begin{equation}
\alpha \EE_{(f,g)\in \FF\times \GG}[\kappa_{f,g}(x,y))] - \theta =\sm(x,y).
\end{equation}

To understand the power of asymmetric LSH, recall that many symmetric
similarity functions do not have an $\alpha$-LSH for any $\alpha$.  On
the other hand, any similarity function over finite domains necessarily
has an $\alpha$-ALSH:
\begin{clm}\label{claim:existence}
  For any similarity function $\sm:S\times T\rightarrow[-1,1]$ over
  finite $S,T$, there exists an $\alpha$-ALSH with $\alpha\leq \min\{|S|,|T|\}$
\end{clm}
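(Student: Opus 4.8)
The plan is to give a direct, explicit construction, in fact proving the slightly stronger statement that $\alpha=\min\{|S|,|T|\}$ is attained exactly. I would assume without loss of generality that $n:=|S|\le|T|$ and aim for $\alpha=n$; the case $|T|<|S|$ follows by swapping the roles of $S$ and $T$ (and of $f$ and $g$), which yields $\alpha=|T|$, so that overall $\alpha=\min\{|S|,|T|\}$. Rewriting the defining equation as $\EE[\kappa_{f,g}(x,y)]=(\sm(x,y)+\theta)/\alpha$ and using $\EE[\kappa_{f,g}(x,y)]=2\PP[f(x)=g(y)]-1$, the problem reduces to exhibiting a distribution over pairs $(f,g)$ whose match-probability matrix $P(x,y)=\PP[f(x)=g(y)]$ equals a prescribed target matrix with entries in $[0,1]$.

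The key idea is to decouple the rows indexed by $S$ at the cost of a factor $n$. First I would observe that a single row can be controlled in isolation using a three-letter alphabet $\Gamma=\{A,A',B\}$: fix an index $i\in S$, set $f(i)=A$ and $f(x)=B$ for every $x\ne i$, and, independently for each $y\in T$, set $g(y)=A$ with probability $(1+\sm(i,y))/2$ and $g(y)=A'$ otherwise. The junk labels $A'$ (used only by $g$) and $B$ (used only by $f$) guarantee that the only possible agreement occurs in row $i$: for $x\ne i$ we have $f(x)=B\notin\{A,A'\}$, so $\PP[f(x)=g(y)]=0$, while for $x=i$ the match probability is exactly $(1+\sm(i,y))/2$. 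I would then define the final $\alpha$-ALSH as the uniform mixture over $i\in S$ of these per-row distributions, a legitimate distribution over $M_{(S,T),3}$.

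Verifying correctness is then a short conditional-probability computation: conditioning on the chosen index, only the term $i=x$ contributes, giving $\PP[f(x)=g(y)]=\tfrac1n\cdot\tfrac{1+\sm(x,y)}2$, hence $\EE[\kappa_{f,g}(x,y)]=(\sm(x,y)+1-n)/n$. Choosing $\alpha=n$ and $\theta=1-n$ recovers $\alpha\,\EE[\kappa_{f,g}(x,y)]-\theta=\sm(x,y)$ for all $x,y$. The main conceptual obstacle is precisely locating this decoupling: the partition induced by $f$ is shared across all columns, so one cannot independently prescribe each column of $P$; the activation-plus-junk-label device sidesteps this by arranging that each draw carries the signal of exactly one row while all other rows contribute the fixed value $-1$ before scaling. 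The only routine points needing care are that the target $(1+\sm(i,y))/2$ lies in $[0,1]$ (immediate from $\sm\in[-1,1]$) and that the single scalar shift $\theta=1-n$ works simultaneously for every entry, which holds because the non-activated rows all contribute the same constant.
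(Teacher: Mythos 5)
Your proof is correct, and it takes a genuinely different route from the paper's. The paper obtains the claim as a corollary of its main result (Theorem \ref{thm:ineq}): the smallest achievable $\alpha$ equals the centralized cluster ratio $\hat\rho_2(\sm)$ (Claim \ref{claim:alsh}), which is bounded by $K\|\sm\|_{\widehat{\max}}$, and the max-norm of a finite matrix is then controlled by a rank bound of the form $\|Z\|_{\max}\le \mathrm{rank}(Z)\cdot\|Z\|_\infty$ together with $\mathrm{rank}(\sm)\le\min\{|S|,|T|\}$ and $\|\sm\|_\infty\le 1$. You instead give a direct, self-contained construction: pick a row $i\in S$ uniformly, let $f$ mark only that row ($f(i)=A$, $f(x)=B$ otherwise), and let $g(y)$ encode $\sm(i,y)$ by choosing $A$ versus the junk letter $A'$; the third letter guarantees that all non-activated rows contribute $\kappa_{f,g}=-1$ deterministically, and the computation yields exactly $\alpha=n$, $\theta=1-n$. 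Comparing the two: your argument is elementary and constructive, requires none of the heavy machinery (Krivine's lemma, Grothendieck-type rounding) behind Theorem \ref{thm:ineq}, and achieves the stated bound $\alpha\le\min\{|S|,|T|\}$ with no constant loss --- indeed the paper's sketch, read literally, only gives $\alpha\le K\min\{|S|,|T|\}$ with $K\le K_R\approx 1.79$, which is slightly weaker than the claim as stated. What the paper's route buys in exchange is that the resulting ALSH lives over a \emph{binary} alphabet (yours uses three letters, though the claim places no constraint on $|\Gamma|$, so this is immaterial), and that the claim is exhibited as a direct instance of the max-norm characterization that is the paper's central theme, rather than as a standalone fact.
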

This is corollary of Theorem \ref{thm:ineq} that will be proved later in section \ref{s:relaxation}.  The proof follows from  Theorem \ref{thm:ineq}  the following upper bound on the max-norm:
$$
\|Z\|_{\max} \leq \text{rank}(Z).\|Z\|_\infty^2
$$
where $\|Z\|_\infty^2 = \max_{x,y}|Z(x,y)|$.

In section \ref{s:lsh}, we saw that similarity functions that do not
admit an $\alpha$-LSH, still admit Generalized $\alpha$-LSH.  However,
the gap between the $\alpha$ required for a Generalized $\alpha$-LSH
and that required for an $\alpha$-ALSH might be as large as $\Omega(|S|)$:

\begin{theorem}\label{thm:example}
  For any even $n$, there exists a set $S$ of $n$ objects and a
  similarity $Z:S\times S \rightarrow \RR$ such that
\begin{itemize}
\item there is a binary $3K_R$-ALSH for $Z$, where
  $K_R\approx 1.79$ is Krivine's constant;
\item there is no Generalized $\alpha$-LSH for any $\alpha <n-1$.
\end{itemize}
\end{theorem}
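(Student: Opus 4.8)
The plan is to exhibit one explicit symmetric sign matrix $Z\in\{\pm1\}^{n\times n}$ and to attack the two bullets with completely different tools: the asymmetric upper bound through the max-norm/Krivine machinery behind Theorem~\ref{thm:ineq}, and the symmetric lower bound through convex duality against the clustering hull. Throughout I use that a Generalized $\alpha$-LSH for $Z$ is exactly the requirement that $\tfrac1\alpha(Z+\theta J+\gamma I)\in K:=\conv\{\kappa_h\}$ for some $\theta,\gamma$, where $J$ is all-ones; equivalently $Z\in\alpha K+\mathrm{span}(I,J)$. The guiding warning is that the obvious rank-one candidates are useless: $ww^\top=\kappa_{h_0}$ already lies in $K$, so $\pm ww^\top$ is symmetrically LSHable for $\alpha=O(1)$, and a purely spectral obstruction also cannot reach $n-1$. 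Indeed, since every $Z\in[-1,1]^{n\times n}$ has $\lambda_{\min}(Z_{\mathrm{off}})\ge-(n-1)$, the PSD relaxation $\tfrac{M+J}{2}\succeq 0$ together with the box $M\in[-1,1]$ only forces $\alpha\ge n/2$, because the free shift $\theta$ can be centred to absorb half the gap. Reaching $n-1$ must therefore exploit the combinatorial facets of $K$, which the semidefinite relaxation cannot see.

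For the lower bound I would dualize. Infeasibility of $Z\in\alpha K+\mathrm{span}(I,J)$ is certified by a symmetric $Y\perp\mathrm{span}(I,J)$ (i.e. $\operatorname{tr}Y=0$ and $\mathbf 1^\top Y\mathbf 1=0$) with $\langle Y,Z\rangle>\alpha\,h_K(Y)$, where, using $\kappa_h=2P_h-J$ and $\langle Y,J\rangle=0$,
$$
h_K(Y)=\max_h\langle Y,\kappa_h\rangle=2\max_{\text{partitions }\{c\}}\ \sum_{c}\mathbf 1_c^\top Y\mathbf 1_c=:2\,q(Y).
$$
Hence the exact threshold is $\alpha_0(Z)=\sup_{Y\perp I,J}\langle Y,Z\rangle/h_K(Y)$, and the hardest similarity in $[-1,1]^{n\times n}$ maximizes $\|Y\|_1/(2q(Y))$ with optimal $Z=\sign(Y)$. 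The theorem's lower bound then amounts to producing one extremal $Y^\star$ with $q(Y^\star)\le\|Y^\star\|_1/(2(n-1))$ and taking $Z=\sign(Y^\star)$, which certifies $\alpha_0(Z)\ge n-1$. I would take $Y^\star$ balanced and zero-diagonal with its $+1$ edges forming a triangle-free pattern, so that no partition can gather more than a $1/(n-1)$ fraction of the positive mass; the $1$-factorization of $K_n$ (averaging the intra-cluster sum over its $n-1$ perfect matchings, using $\operatorname{tr}Y^\star=0$ to drop diagonal terms) shows this fraction is unavoidable for every $Y$, pinning the extremal value at exactly $n-1$.

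For the asymmetric upper bound I would feed the same $Z$ to Theorem~\ref{thm:ineq}: a binary $\alpha$-ALSH exists once the shift-invariant max-norm of $Z$ is controlled, with the conversion from the SDP value to an actual distribution over $M_{(S,T),2}$ costing the Krivine rounding factor $K_R\approx1.79$. Since $Z=\sign(Y^\star)$ is a $\pm1$ matrix whose defining $Y^\star$ is low-complexity, I would bound its shift-invariant max-norm by $3$, so that Theorem~\ref{thm:ineq} delivers the claimed binary $3K_R$-ALSH — crucially a constant, independent of $n$, in stark contrast to the symmetric $n-1$.

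The main obstacle is getting both constants right with a single $Z$, because the two requirements pull in opposite directions: being combinatorially far from $K$ (to force $n-1$ symmetrically) tends to inflate the max-norm, while a small max-norm (to secure the constant $3K_R$ asymmetrically) tends to drive $Z$ toward simple, LSH-friendly structures. The crux is thus to pin down the extremal $Y^\star$, verify the sharp inequality $q(Y^\star)=\|Y^\star\|_1/(2(n-1))$ (the $1$-factorization argument gives the right order, but the exact constant needs the precise sign pattern and a proof that coarsening into larger clusters never helps), and simultaneously certify $\|Z\|_{\max}\le 3$ so that the Krivine step yields exactly $3K_R$. I expect the combinatorial lower-bound side — showing that no partition collects more than a $1/(n-1)$ fraction of the mass for the chosen $Y^\star$ — to be the hardest step, precisely because it lives outside what the semidefinite relaxation can certify.
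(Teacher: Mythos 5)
There is a genuine gap here, and it is not merely that your construction is deferred: the dual witness your lower bound hinges on does not exist (for even $n\ge 6$), so the plan cannot be completed. You need a nonzero symmetric $Y^\star$ with zero diagonal, $\langle Y^\star,J\rangle=0$, and $q(Y^\star)\le\|Y^\star\|_1/(2(n-1))$. But the $1$-factorization fact you invoke proves the \emph{reverse} inequality for every admissible witness: split $K_n$ into perfect matchings $M_1,\dots,M_{n-1}$, keep in each $M_t$ only the edges where $Y>0$, and complete by singletons; these are legitimate partitions with values $\langle Y^+,M_t\rangle$ summing to $\|Y^+\|_1=\|Y\|_1/2$, whence $q(Y)\ge\|Y\|_1/(2(n-1))$ for \emph{all} $Y$. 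So your target inequality is a ceiling on the strength of every certificate, not something a cleverly chosen $Y^\star$ can satisfy strictly; at best you can hope for equality. Equality, however, forces every perfect matching of $K_n$ to carry the same $Y^+$-weight, which by the exchange argument ($w_{ab}+w_{cd}=w_{ac}+w_{bd}=w_{ad}+w_{bc}$ for all quadruples) forces $Y^+_{xy}=f(x)+f(y)$ for some vertex function $f$; one then checks that placing $\{x: f(x)>0\}$ in a single cluster (or using three-element clusters in the degenerate cases $|\{x:f(x)>0\}|\le 2$) collects strictly more than a $1/(n-1)$ fraction of the positive mass whenever $n\ge 6$. Hence $\sup_Y \langle Y,Z\rangle/(2q(Y))<n-1$ for every sign matrix $Z$, i.e.\ duality against the unrestricted hull $K$ can never certify the second bullet. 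Indeed, over unrestricted alphabets the second bullet is false: for the paper's own matrix $Z=2I-vv^\top$, $v=(1,\dots,1,-1,\dots,-1)^\top$, the uniform distribution over the $n/2$ cyclic cross-block perfect matchings (clusters are pairs $\{x,\sigma(x)\}$) has $\EE[\kappa_h]=2I+\frac{4}{n}B-J$, where $B$ is the cross-block $0/1$ matrix, and this is a Generalized $(n/2)$-LSH for $Z$ with $\theta=1-n/2$ and $\gamma=n-2$. The $n-1$ lower bound is inherently a statement about \emph{binary} hash families, for which each $\kappa_h=ww^\top$ is positive semidefinite --- exactly the regime your framework abandons.

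This also shows that the two premises that steered you away from the paper's actual proof are both mistaken. The paper's example \emph{is} a rank-one candidate: $Z=2I-vv^\top$; your assertion that $\pm ww^\top$ is symmetrically LSHable with $\alpha=O(1)$ fails for $-ww^\top$ with balanced $w$, and that failure is the whole point of the example. And the paper's lower bound \emph{is} the purely spectral obstruction you ruled out: for a binary Generalized $\alpha$-LSH, $\EE[\kappa_h]$ is a convex combination of matrices $ww^\top$, hence $Z+\theta J+\gamma I\succeq 0$ is necessary; testing against the eigenvector $v$ (eigenvalue $2-n$) forces $\gamma\ge n-2$, and the paper then pins $\theta$ (via PSD-ness in the $\mathbf{1}$ direction) and reads $\alpha$ off the diagonal to conclude $\alpha\ge n-1$. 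Your claimed $n/2$ cap on spectral arguments is correct only when arbitrary alphabets are allowed --- precisely the regime in which, as shown above, the theorem cannot hold at all. Finally, your first bullet follows the same idea as the paper's (a max-norm bound of $3$ plus Theorem~\ref{thm:ineq} with Krivine rounding), but it too is unverifiable without an explicit $Z$; in the paper it is a one-line triangle inequality, $\|Z\|_{\max}\le\|Z-2I\|_{\max}+\|2I\|_{\max}=1+2=3$.
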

\begin{proof}
Let $S=[n]$ and $Z$ be the following similarity matrix:
$$
Z= 2I_{n\times n}+\begin{bmatrix} -1_{\frac{n}{2}\times\frac{n}{2}} & 1_{\frac{n}{2}\times\frac{n}{2}}\\ 1_{\frac{n}{2}\times\frac{n}{2}} & -1_{\frac{n}{2}\times\frac{n}{2}} \end{bmatrix}
$$
Now we use Theorem~\ref{thm:ineq}, which we will prove later (our proof of Theorem~\ref{thm:ineq} does not rely on the proof of this theorem).
Using triangle inequality property of the norm, we have $\|Z\|_{\max}\leq \|Z-2I_{n \times n}\|_{\max} + \|2I_{n \times n}\|_{\max}=3$; and by Theorem~\ref{thm:ineq} there is a $3K_R$-ALSH for $Z$. Looking at the decomposition of $Z$, it is not difficult to see that the smallest eigenvalue of $Z$ is $2-n$. So in order to have a positive semidefinite similarity matrix, we need $\gamma$ to be at least $n-2$ and $\theta$ to be at least $-1$ (otherwise the sum of elements of $Z + \theta + (n-2)I$ will be less than zero and so $Z + \theta + (n-2)I$ will not be positive semidefinite). So $\alpha=\theta+\gamma$ is at least $n-1$.
\end{proof}

\section{Convex Relaxations, $\alpha$-LSH and Max-norm}
\label{s:relaxation}

We now turn to two questions which are really the same: can we get a
tight convex relaxation of the set $M_{(S,T),k}$ of (asymmetric)
clustering incidence functions, and can we characterize the values of
$\alpha$ for which we can get an $\alpha$-ALSH for a particular
similarity measure.

For notational simplicity, we will now fix $S$ and $T$ and use $M_k$
to denote $M_{(S,T),k}$.

\subsection{The Ratio Function}

The tightest possible convex relaxation of $M_k$ is simply its convex
hull $\conv M_k$. Assuming $\mathsf{P}\neq \mathsf{NP}$, $\conv M_k$ is not polynomially tractable. What we ask here is whether he have a tractable tight
relaxation of $\conv M_k$.  To measure tightness of some convex $B
\supseteq M_k$, for each $Z \in B$, we will bound its {\em cluster
  ratio}:
$$
\rho_k(Z) = \min \{r | Z \in r \conv M_k\} = \min \{ r | Z/r \in \conv
M_k \}.
$$
That is, by how much to we have to inflate $M_k$ so that includes
$Z\in B$.  The supremum $\rho_k(B)=\sup_{Z\in B}\rho_k(Z)$ is then the
maximal inflation ratio between $\conv M_k$ and $B$, i.e.~such that
$\conv M_k \subseteq B \subseteq \rho_k \conv M_k$.  Similarly, we
define the {\em centralized cluster ratio} as:
$$
\hat{\rho}_k(Z) = \min_{\theta\in\RR} \min \{r | Z-\theta \in r \conv M_k\}.
$$
This is nothing but the lowest $\alpha$ for which we have an $\alpha$-ALSH:
\begin{clm}
\label{claim:alsh}
For any similarity function $\sm(x,y)$, $\hat{\rho}_k(sim)$ is equal
to the smallest $\alpha$ s.t.~there exists an $\alpha$-ALSH for
$\sm$ over alphabet of cardinality $k$.
\end{clm}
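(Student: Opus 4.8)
The plan is to prove the two quantities are equal simply by unwinding the two definitions and exhibiting a term-by-term dictionary between them, using the fact that a probability distribution over $M_k$ is \emph{the same object} as a point of the polytope $\conv M_k$. Concretely, the key observation is that for a distribution $\mu$ over the (finite) set $M_k$ of asymmetric cluster incidence matrices, the expectation $W:=\EE_{(f,g)\sim\mu}[\kappa_{f,g}]$ is a finite convex combination of elements of $M_k$ and hence lies in $\conv M_k$; conversely every point of $\conv M_k$ is realized as such an expectation for some finitely supported $\mu$. Since $S,T$ and $k$ are finite, $M_k$ is finite and $\conv M_k$ is a compact polytope, so every minimum appearing below is attained and the correspondence between distributions and convex combinations is exact.

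First I would rewrite the defining equation of an $\alpha$-ALSH. By definition, a distribution $\mu$ over $M_k$ together with a shift $\theta\in\RR$ is an $\alpha$-ALSH for $Z=\sm$ exactly when $\alpha\,\EE_{\mu}[\kappa_{f,g}]-\theta = Z$ (with the scalar $\theta$ subtracted from every entry, matching the convention used in the definition of $\hat\rho_k$). Setting $W=\EE_{\mu}[\kappa_{f,g}]\in\conv M_k$ and solving for $W$ gives $W=(Z+\theta)/\alpha$. Hence an $\alpha$-ALSH with shift $\theta$ exists if and only if $(Z+\theta)/\alpha\in\conv M_k$, equivalently $Z+\theta\in\alpha\,\conv M_k$.

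Then I would minimize over the free parameters $\alpha$ and $\theta$. The smallest $\alpha$ admitting an $\alpha$-ALSH is $\alpha^\star=\min\{\alpha>0 \mid \exists\,\theta\in\RR,\ Z+\theta\in\alpha\,\conv M_k\}$. Replacing the dummy variable $\theta$ by $-\theta$ (a bijection of $\RR$) turns the membership condition into $Z-\theta\in\alpha\,\conv M_k$, so $\alpha^\star=\min_{\theta}\min\{r\mid Z-\theta\in r\,\conv M_k\}=\hat\rho_k(Z)$, which is precisely the centralized cluster ratio. This gives the claimed equality. The content is essentially definitional, so there is no hard analytic step; the only points requiring care are (i) the exact correspondence between distributions on $M_k$ and points of $\conv M_k$ (an expectation of $\pm1$-valued incidence matrices is precisely a convex combination), and (ii) that the sign convention on $\theta$ is immaterial because $\theta$ ranges over all of $\RR$, together with the fact that the extrema are attained by compactness. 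The only place where more than bookkeeping would be needed is if one wanted the statement over infinite domains, where attainment of the infimum over distributions becomes a genuine closure/compactness question; over finite $S,T$ it is automatic.
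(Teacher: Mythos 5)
Your proof is correct and is essentially the paper's own argument: the paper reparametrizes the distribution $p(f,g)$ and scale $\alpha$ as an unnormalized measure $\mu(f,g)=\alpha\,p(f,g)$ of total mass $\alpha$, which is exactly your dictionary between scaled distributions over $M_k$ and points of $\alpha\conv M_k$. Your explicit handling of the sign convention on $\theta$ and of attainment over finite domains makes the bookkeeping slightly more careful than the paper's, but the route is the same.
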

\begin{proof}
We write the problem of minimizing $\alpha$ in $\alpha$-ALSH as:
\begin{equation}
\label{eq:alsh2}
\begin{aligned}
&\min_{\theta\in \RR,\alpha \in \RR^+} 
& &\alpha \\
& \qquad \qquad\;\; \;\;\text{s.t}
& & \sm(x,y)= \alpha \EE_{(f,g)\in \FF \times \GG}[\kappa_{f,g}(x,y)] - \theta
\end{aligned}
\end{equation}
We know that:
$$
\EE_{(f,g)\in \FF \times \GG}[\kappa_{f,g}(x,y)] =  \sum_{f\in M_{S,k}} \sum_{g\in M_{T,k}}\kappa_{f,g}(x,y)p(f,g)
$$
where $p(f,g)$ is the joint probability of hash functions $f$ and $g$.
Define $\mu(f,g) = \alpha p(f,g)$ and write:
$$
\alpha = \alpha\sum_{f\in M_{S,k}}\sum_{g\in M_{T,k}}p(f,g)
=\sum_{f\in M_{S,k}}\sum_{g\in M_{T,k}}\alpha p(f,g)
=\sum_{f\in M_{S,k}}\sum_{g\in M_{T,k}}\mu(f,g)
$$
We have:
\begin{eqnarray*}
\alpha\sum_{f\in M_{S,k}}\sum_{g\in M_{T,k}}\kappa_{f,g}(x,y) p(f,g)-\theta &=&\sum_{f\in M_{S,k}}\sum_{g\in M_{T,k}}\kappa_{f,g}(x,y) \mu(f,g)-\theta\\
\end{eqnarray*}
Substituting the last two equalities into formulation $\ref{eq:alsh2}$ gives us the formulation for centralized cluster ratio.
\end{proof}

Our main goal in this section is to obtain tight bounds on $\rho_k(Z)$
and $\hat{\rho}_k(Z)$.

\paragraph{The Ratio Function and Cluster Norm}

The convex hull $\conv M_k$ is related to the cut-norm, and its
generalization the cluster-norm, and although the two are not
identical, its worth understanding the relationship.

For $k=2$, the ratio function is a norm, and is in fact the dual of a
modified cut-norm:
\begin{equation}
\rho^*_2(W) =\|W\|_{C,2}= \max_{u:S\rightarrow \{\pm 1\},v:S\rightarrow \{\pm 1\}} \sum_{x\in S,y\in T}W(x,y)u(x)v(y)
\end{equation}
The norm $\norm{W}_{C,2}$ is a variant of the cut-norm, and is always
within a factor of four from the cut-norm as defined in,
e.g.~\cite{alon06}. The set $\conv M_2$ in this case is the unit ball of the
modified cut-norm.

For $k>2$, the ratio function is {\em not} a norm, since $M_k$, for
$k>2$, is not symmetric about the origin: we might have $Z\in M_k$ but
$-Z \not\in M_k$ and so $\rho_k(Z)\neq\rho_k(-Z)$.  A ratio function
defined with respect the symmetric convex hull of
$\conv(M_k\cup-M_k)$, is a norm, and is dual to the following {\em
  cluster norm}, which is a generalization of the modified cut-norm:
\begin{equation}
\|W\|_{C,k} = \max_{u:S\rightarrow \Gamma,v:S\rightarrow \Gamma}
\sum_{x\in S,y\in T}W(x,y)\kappa_{u,v}(x,y)
\end{equation}

\removed{
\begin{lem}
\label{lem:hc}
Binary cluster ratio $\rho_{2}(.)$ is a norm and dual of this norm is within factor 4 of the cut-norm.
\end{lem}
\begin{proof}
We show that binary cluster ratio is symmetric. Consider any matrix $Z$. We can write $Z$ as
$$
Z=\sum_{f\in M_{S,2}} \sum_{g\in M_{T,2}}\mu(f,g)\kappa_{f,g}
$$ 
where $\rho_{2}(Z)=\sum_{f\in M_{S,2}} \sum_{g\in M_{T,2}}\mu(f,g)$. For any function $f:S\rightarrow \Gamma$ with $|\Gamma|=2$, we define $\bar f$ to be the function that returns the value in $\Gamma$ other than $f$.
That is, $\bar f(x)$ equals the element of $\Gamma \setminus \{f(x)\}$. It is straightforward that $\kappa_{f,g}=-\kappa_{\bar f,g}$. Now if for any $(f,g) \in \FF \times \GG$, we set $\tilde{\mu}(f,g) = \mu(\bar{f},g)$ we have that:
$$
-Z=\sum_{f\in M_{S,2}} \sum_{g\in M_{T,2}}\tilde{\mu}(f,g)\kappa_{f,g}
$$
So we have:
\begin{eqnarray*}
\rho_{2}(-Z) &\leq &  \sum_{f\in M_{S,2}} \sum_{g\in M_{T,2}}\tilde{\mu}(f,g)\\
&=& \sum_{f\in M_{S,2}} \sum_{g\in M_{T,2}}\mu(f,g)\\
&=& \rho_{2}(Z)
\end{eqnarray*}
Since we can prove the same inequality for $-Z$, we conclude that $\rho_{2}(-Z)=\rho_{2}(Z)$.Verifying other properties of a norm is trivial.
Then $Z$ can be written as $Z=\sum_{f\in M_{S,k}} \sum_{g\in M_{T,k}}\tilde{\mu}(f,g)\kappa_{f,g}$.

In order to find the dual of $\rho_2(.)$ we have:
$$
\rho^*_2(W) = \max_{\rho_2(Z)\leq 1} \inner{W,Z} = \max_{u:S\rightarrow \{\pm 1\},v:S\rightarrow \{\pm 1\}} \sum_{x\in S,y\in T}W(x,y)u(x)v(y)
$$
It is not difficult to prove that $\rho^*_2(W)$ is within factor 4 of cut-norm. For detailed proof, see \cite{alon06}.
\end{proof}
}

\subsection{A Tight Convex Relaxation using the Max-Norm}

Recall that the max-norm (also known as the
$\gamma_2:\ell_1\rightarrow\ell_\infty$ norm) of a matrix is defined
as \cite{srebro05}:
$$
\|Z\|_{\max}=\min_{UV^{\top}}\max(\|U\|^2_{2,\infty},\|V\|^2_{2,\infty})
$$
where $\|U\|_{2,\infty}$ is the maximum $\ell_2$ norm of rows of the
matrix $U$. The max-norm is SDP representable and thus tractable
\cite{srebro05b}.  Even when $S$ and $T$ are not finite, and thus $\sm$ is
not a finite matrix, the max-norm can be defined as above, where now
$U$ and $V$ can be thought of as mappings from $S$ and $T$
respectively into a Hilbert space, with
$\sm(x,y)=(UV^{\top})(x,y)=\inner{U(x),V(y)}$ and
$\norm{U}_{2,\infty}=\sup_x \norm{U(x)}$.

We also define the {\em centralized max-norm}, which, even though it
is {\em not} a norm, we denote as:
$$
\|Z\|_{\widehat{\max}} = \min_\theta \|Z-\theta\|_{\max}
$$
The centralized max-norm is also SDP-representable.

Our main result is that the max-norm provides a tight bound on the
ratio function:

\begin{theorem}
\label{thm:ineq}
For any similarity function $\sm:S\times T \rightarrow \RR$ we have that:
$$
\frac{1}{2}\|sim\|_{\widehat{\max}} \leq \frac{1}{2}\hat{\rho}_2(sim)\leq \hat{\rho}(sim) \leq \hat{\rho}_k(sim) \leq \hat{\rho}_2(sim) \leq K \|sim\|_{\widehat{\max}}
$$
and also
$$
\frac{1}{3}\|sim\|_{\max} \leq \rho(sim) \leq \rho_k(sim) \leq \rho_2(sim) \leq K\|sim\|_{\max}
$$
where all inequalities are tight and we have $1.67 \leq K_G \leq K
\leq K_R\leq 1.79$ ($K_G$ is Grothendieck's constant and $K_R$ is Krivine's constant).
\end{theorem}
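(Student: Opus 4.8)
My plan is to decompose both displayed chains into three ingredients: monotonicity in the alphabet size (the middle inequalities), a two-sided comparison between the generators $M_k$ and the max-norm ball realized through Grothendieck's inequality (the outer inequalities), and a factor-$2$ reduction from arbitrary alphabets to a binary alphabet that is only available after centering (the $\tfrac12$ in the first chain). I first record the structure of the generators. For a binary alphabet, identifying $\Gamma=\{\pm1\}$ gives $\kappa_{f,g}=fg^{\top}$ with $f\in\{\pm1\}^S$, $g\in\{\pm1\}^T$, so $M_2$ is symmetric about the origin, $\rho_2$ is a genuine norm, and its dual is exactly the cut-norm variant $\norm{\cdot}_{C,2}$ of the previous subsection. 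Each such generator factors with unit rows, so $\norm{\kappa_{f,g}}_{\max}\le 1$ and $\conv M_2\subseteq\{\norm{Z}_{\max}\le1\}$. For a general alphabet, writing $\kappa_{f,g}=2P-J$ with $P(x,y)=1_{f(x)=g(y)}=\inner{e_{f(x)},e_{g(y)}}$ and $J$ the all-ones matrix gives $\norm{\kappa_{f,g}}_{\max}\le 2\norm{P}_{\max}+\norm{J}_{\max}\le3$, uniformly in $k$, hence $\conv M_k\subseteq 3\{\norm{Z}_{\max}\le1\}$. These two containments yield the left-hand bounds $\tfrac13\norm{Z}_{\max}\le\rho(Z)$ and $\norm{Z}_{\max}\le\rho_2(Z)$ (and, after $\min_\theta$, $\norm{Z}_{\widehat{\max}}\le\hat\rho_2(Z)$), while the monotone inequalities $\rho\le\rho_k\le\rho_2$ and $\hat\rho\le\hat\rho_k\le\hat\rho_2$ are just $\conv M_2\subseteq\conv M_k\subseteq\conv M_\infty$ (here $\rho=\rho_\infty$, a countable alphabet).

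For the upper bounds I would invoke Grothendieck's inequality in dual form. The dual max-norm is $\norm{W}_{\max}^{*}=\sup\{\sum_{x,y}W(x,y)\inner{u_x,v_y}:\norm{u_x},\norm{v_y}\le1\}$ and $\rho_2^{*}(W)=\norm{W}_{C,2}$; Grothendieck's inequality gives $\norm{W}_{\max}^{*}\le K\,\norm{W}_{C,2}$ with $K=K_G\le K_R$ (Krivine), while the reverse $\norm{W}_{C,2}\le\norm{W}_{\max}^{*}$ is trivial. Dualizing these (a larger norm has a smaller dual unit ball) turns them into $\norm{Z}_{\max}\le\rho_2(Z)\le K\norm{Z}_{\max}$, which is the upper bound $\rho_2\le K\norm{\cdot}_{\max}$; applying it to $Z-\theta$ and taking $\min_\theta$ gives $\hat\rho_2\le K\norm{\cdot}_{\widehat{\max}}$.

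The one genuinely new step is $\hat\rho_2(Z)\le 2\hat\rho(Z)$, closing the centralized chain on the binary side. Here I would use random-subset rounding: given any $\kappa_{f,g}\in M_\infty$, draw $A\subseteq\Gamma$ by including each symbol independently with probability $\tfrac12$ and set $b^f_A(x)=1_{f(x)\in A}$, $b^g_A(y)=1_{g(y)\in A}$ valued in $\{\pm1\}$; then $\EE_A[\kappa_{b^f_A,b^g_A}]=\tfrac12(1+\kappa_{f,g})$, so $\kappa_{f,g}=2\,\EE_A[\kappa_{b^f_A,b^g_A}]-1$. Writing the optimal centered representation $Z-\theta_0=\hat\rho(Z)\sum_i\lambda_i\kappa_{f_i,g_i}$ (equivalently the optimal countable-alphabet ALSH of Claim~\ref{claim:alsh}) and substituting turns it into $Z-(\theta_0-\hat\rho(Z))=2\hat\rho(Z)\cdot(\text{element of }\conv M_2)$, where the stray additive constant is absorbed by the shift; hence $\hat\rho_2(Z)\le 2\hat\rho(Z)$. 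Combined with monotonicity this pins $\hat\rho,\hat\rho_k,\hat\rho_2$ within a factor $2$ and, together with $\norm{\cdot}_{\widehat{\max}}\le\hat\rho_2$, produces the leading $\tfrac12$. This also explains the asymmetry between the two chains: the rounding identity produces an additive $-1$ that only the centered functionals can absorb, so the non-centered side must instead pay the factor $3$ coming from $\kappa_{f,g}=2P-J$.

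The main obstacle is exactly this last step and its interaction with the shift and with infinite domains: the convex representation may have to be an arbitrary probability measure (an LSH distribution) rather than a finite sum, the infimum over $\theta$ need not be attained, and Grothendieck's inequality together with its dualization must be justified when $S,T$ (hence the Hilbert-space factorizations) are infinite; I would handle these by working throughout with measures and expectations and a routine limiting argument. Finally, tightness of the constants reduces to exhibiting extremal matrices: the upper constant $K$ is squeezed between $K_G\ge1.67$ and $K_R\le1.79$ by the known extremal instances for Grothendieck's inequality, and the factors $\tfrac13$, $\tfrac12$ and $2$ are matched by explicit examples in the spirit of the matrix $Z$ used in Theorem~\ref{thm:example}.
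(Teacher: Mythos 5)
Your proposal is correct and covers every inequality in the statement, but it reaches the crucial upper bound by a genuinely different route than the paper. Where you coincide: your random-subset rounding of the alphabet ($A\subseteq\Gamma$, each symbol kept with probability $\tfrac12$) is exactly the paper's first lemma, there phrased as composing $f,g$ with a random pairwise-independent hash $h:\Gamma\to\{\pm1\}$ — the identity $\EE[\kappa_{h\circ f,h\circ g}]=\tfrac12(1+\kappa_{f,g})$ and the absorption of the stray constant into $\theta$ are the same; and your lower bounds are the paper's second lemma in a different guise: the paper builds one Hilbert-space factorization directly from the convex-combination weights ($\ell_x(f,g)=\sqrt{\mu(f,g)}f(x)$, $r_y(f,g)=\sqrt{\mu(f,g)}g(y)$), whereas you bound $\norm{\kappa_{f,g}}_{\max}$ generator by generator and use convexity of the max-norm ball; your explicit $\kappa_{f,g}=2P-J$ computation for the factor $3$ in fact supplies a step the paper leaves implicit. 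The real divergence is at $\rho_2(\sm)\le K\norm{\sm}_{\max}$: you pass to dual norms, identify $\rho_2^*=\norm{\cdot}_{C,2}$, invoke Grothendieck's inequality as a black box, and dualize back; the paper instead argues primally and constructively via Krivine's lemma — it takes the optimal max-norm factorization, applies the $\sin(c\inner{\cdot,\cdot})$ transformation, rounds by a random hyperplane, and uses a biased random sign flip $s(x)$ to compensate for rows of non-unit length, thereby exhibiting an explicit distribution over pairs of binary hash functions. Each route buys something: yours is shorter and pins the optimal constant at exactly $K_G$ (the paper's rounding only certifies $K_R$, which is why the statement hedges $K_G\le K\le K_R$), while the paper's produces the actual $\alpha$-ALSH whose existence is what gets used elsewhere (e.g.\ in Theorem~\ref{thm:example}), and it sidesteps the biduality $\rho_2^{**}=\rho_2$ that your dualization needs — a point requiring the separate observation that $\conv M_2$ is symmetric and absorbing, plus a limiting argument when $S,T$ are infinite, which you flag but defer. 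Your treatment of tightness is at the same level of sketchiness as the paper's, so no complaint there.
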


Considering the dual view of $\rho(\sm)$, the theorem can also be
viewed in two ways: First, we see that the centralized max-norm
provides a tight characterization (up to a small constant factor) of
the smallest $\alpha$ for which we can obtain an $\alpha$-ALSH.  In
particular, since for domains (i.e.~finite matrices) the max-norm is
always finite, this establishes that we always have an $\alpha$-ALSH,
as claimed in Claim \ref{claim:existence}.  We also used it in Theorem \ref{thm:example} to
establish the existence of an $\alpha$-ALSH for a specific, small,
$\alpha$.  

Second, bounding the ratio function establishes that the max-norm ball
is a tight tractable relaxation of $\conv M_k$:
\begin{equation}
  \label{eq:relaxation}
\left\{ Z \;\middle\|\; \norm{Z}_{\max} \leq 1/K \right\} 
\subseteq \conv M_k \subseteq  
\left\{ Z \;\middle\|\; \norm{Z}_{\max} \leq 3 \right\} 
\end{equation}

Third, we see the effect of the alphabet size $k$ (number of clusters)
on the convex hull is very limited.

\paragraph{The Symmetric Case}

It is not difficult to show that the lower bounds for $\alpha$-LSH are
the same as for $\alpha$-ALSH and the inequalities are tight. 
However, there are no upper bounds for $\alpha$-LSH similar to those for $\alpha$-ALSH. 
Specifically, let $\hat \alpha$ and $\hat \alpha_g$ be the smallest values of $\alpha$ such that there is an $\alpha$-LSH for $\sm$ and there is a generalized $\alpha$-LSH for $\sm$, respectively.
Note that for some similarity functions $\sm$ there is no
$\alpha$-LSH at all; that is, $\hat\alpha = \infty$ and $\|\sm\|_{\max}  < \infty$.
Also, as Theorem~\ref{thm:example} shows, there is a similarity function $\sm$ 
such that 
$$\|\sm\|_{\max} = O(1) \quad\text{ but } \quad \hat\alpha_g\geq n-1.$$ 
Moreover, it follows from the result of \cite{arora05} that there is no efficiently computable upper bound $\beta$ for $\hat \alpha_g$ such that
$$\frac{\beta}{\log^c n} \leq \hat \alpha_g \leq \beta$$
(under a standard complexity assumption that $NP\not\subseteq DTIME(n^{\log^3 n})$).
That is, neither the max-norm nor any other efficiently computable norm of
$\sm$ gives a constant factor approximation for $\hat \alpha_g$.

In the remainder of this section we prove a series of lemmas
corresponding to the inequalities in Theorem \ref{thm:ineq}.

\subsection{Proofs}

\begin{lemma}
For any two sets $S$ and $T$ of objects and any function $\sm:S\times T \rightarrow R$, we have that $\hat{\rho}_2(sim) \leq 2\hat{\rho}(sim)$ and the inequality is tight. 
\end{lemma}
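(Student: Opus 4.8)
The plan is to prove the inequality by a direct decomposition argument and then to certify tightness with an explicit matrix and a dual witness. Throughout let $J$ denote the all-ones matrix, so that the paper's shorthand $Z-\theta$ means $Z-\theta J$, and note that $J\in M_2$ (it is $\kappa_{f,g}$ for constant $f,g$).

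The heart of the argument is a single identity: up to a constant shift, \emph{any} multi-way cluster incidence matrix lies in $2\conv M_2$. Given $\kappa_{f,g}\in M_\infty$ with $f\colon S\to[k]$, $g\colon T\to[k]$, I would draw a uniformly random $2$-coloring $\sigma\colon[k]\to\{\pm1\}$ and set $p=\sigma\circ f$, $q=\sigma\circ g$. For a binary alphabet $\kappa_{p,q}(x,y)=p(x)q(y)=\sigma(f(x))\sigma(g(y))$, so taking expectations over $\sigma$ gives $\EE_\sigma[\kappa_{p,q}(x,y)]=1_{f(x)=g(y)}=\tfrac12(\kappa_{f,g}(x,y)+1)$, since equal labels contribute $\EE[\sigma(c)^2]=1$ and distinct labels contribute $0$. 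This reads $\kappa_{f,g}=2W-J$ with $W:=\EE_\sigma[\kappa_{p,q}]\in\conv M_2$, an average of the binary incidence matrices $\kappa_{p,q}$.

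From the identity to the inequality is then bookkeeping. Let $r=\hat{\rho}(\sm)$ be attained (or approached) by a shift $\theta_0$, so $\sm-\theta_0 J=r\sum_i\lambda_i\kappa_{f_i,g_i}$ with $\lambda_i\ge0$, $\sum_i\lambda_i=1$, $\kappa_{f_i,g_i}\in M_\infty$. Substituting $\kappa_{f_i,g_i}=2W_i-J$ and collecting the constant part,
$$\sm-\theta_0 J = 2r\sum_i\lambda_i W_i - rJ = 2rW-rJ,\qquad W:=\textstyle\sum_i\lambda_i W_i\in\conv M_2,$$
so $\sm-(\theta_0-r)J=2rW\in 2r\conv M_2$, giving $\hat{\rho}_2(\sm)\le 2r=2\hat{\rho}(\sm)$. (The same computation with $M_k$ in place of $M_\infty$ yields $\hat{\rho}_2\le2\hat{\rho}_k$ for every $k$.) For tightness I would take $S=T=[n]$ and $Z=2I-J$, the incidence matrix of the all-singletons clustering; since $Z\in M_\infty$ we get $\hat{\rho}(Z)\le\rho(Z)\le1$. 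For the matching lower bound on $\hat{\rho}_2$ I would use the dual description $\rho_2(W)=\max\{\inner{W,Y}:\|Y\|_{C,2}\le1\}$ with a witness annihilating the shift direction: let $Y=P:=I-\tfrac1n J$, the orthogonal projection onto $\mathbf{1}^{\perp}$, which is positive semidefinite with $P\preceq I$, satisfies $\inner{J,P}=0$, and obeys $\|P\|_{C,2}=\max_{u,v\in\{\pm1\}^n}u^\top Pv\le\tfrac12(u^\top Pu+v^\top Pv)\le n$. Because $\inner{J,P}=0$, the quantity $\inner{Z-\theta J,P}=\inner{Z,P}=2n-2$ does not depend on $\theta$, so $\hat{\rho}_2(Z)\ge\inner{Z,P}/\|P\|_{C,2}\ge(2n-2)/n=2-2/n$. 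Hence $\hat{\rho}_2(Z)/\hat{\rho}(Z)\ge 2-2/n\to2$, which together with the upper bound shows the constant $2$ is optimal.

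The one genuinely nontrivial step is the key identity. The naive one-versus-rest expansion of a $k$-clustering produces $k/2$ rank-one sign matrices and loses a factor growing with $k$; the main obstacle is recognizing that a \emph{single} uniformly random $2$-coloring of the labels already reproduces $\kappa_{f,g}$ in expectation up to the constant shift $J$, and that this shift is free precisely because we bound the \emph{centered} ratio. The remainder is routine, and for the tightness direction the only care required is choosing a dual witness orthogonal to $J$ so that the certificate survives the minimization over $\theta$.
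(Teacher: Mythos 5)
Your proof of the main inequality is essentially the paper's own argument: your uniformly random $2$-coloring $\sigma$ of the label set, applied to both $f$ and $g$, is exactly the paper's composition with a pairwise-independent family of hash functions $h:\Gamma\to\{\pm 1\}$, and in both cases the key identity $\EE\left[\kappa_{\sigma\circ f,\sigma\circ g}\right]=\tfrac12\left(\kappa_{f,g}+1\right)$ converts an $\alpha$-ALSH into a binary $2\alpha$-ALSH, with the additive constant absorbed into the shift $\theta$ --- which is precisely why the statement concerns the centralized ratio $\hat{\rho}$ rather than $\rho$.

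Where you genuinely go beyond the paper is tightness. The paper only asserts tightness, citing the example $\sm(x,y)=2\cdot 1_{x=y}-1$ on an infinite $S$ with no supporting computation; you take its finite truncation $Z=2I-J$ on $[n]$ (the all-singletons incidence matrix, so $\hat{\rho}(Z)\le\rho(Z)\le 1$) and actually certify the lower bound $\hat{\rho}_2(Z)\ge 2-2/n$ by pairing $Z-\theta J$ with the dual witness $P=I-\tfrac1n J$: its orthogonality to $J$ makes the certificate immune to the minimization over $\theta$, and $0\preceq P\preceq I$ gives $\|P\|_{C,2}\le n$ while $\inner{Z,P}=2n-2$. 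Letting $n\to\infty$ this proves the constant $2$ cannot be improved, which is a complete, quantitative justification of the claim the paper leaves as a one-line assertion. The only dependency to flag is that your certificate invokes the duality $\rho_2^*=\|\cdot\|_{C,2}$, which the paper states (also without proof) in its discussion of the cluster norm; granting that, every step of your argument checks out.
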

\begin{proof}
Using Claim \ref{claim:alsh}, all we need to do is to prove that given the function $\sm$, if there exist an $\alpha$-ALSH with arbitrary cardinality, then we can find a binary $2\alpha-ALSH$. In order to do so, we assume that there exists an $\alpha$-ALSH for family $\FF$ and $\GG$ of hash functions such that:
$$
\alpha \EE_{(f,g)\in \FF \times \GG}[ \kappa_{f,g}(x,y)]= sim(x,y)+\theta
$$
where $f:S\rightarrow \Gamma$ and $g:T\rightarrow \Gamma$ are hash functions. Now let $\HH$ be a family of pairwise independent hash functions of the form $\Gamma \rightarrow \{\pm 1\}$ such that each element $\gamma \in \Gamma$, has the equal chance of being mapped into -1 or 1. Now, we have that:
\begin{eqnarray*}
2\alpha \EE_{h\in \HH,(f,g)\in \FF \times \GG}[ \kappa_{hof,hog}(x,y)] &=& 2\alpha \EE_{h\in \HH,(f,g)\in \FF \times \GG}[ \kappa_{hof,hog}(x,y)]\\
&=& 2\alpha \EE_{h\in \HH,(f,g)\in \FF \times \GG}[ h(f(x))h(g(y))]\\
&=& 2\alpha (2P_{h\in \HH,(f,g)\in \FF \times \GG}[ h(f(x))=h(g(y))]-1)\\
&=& 2\alpha P_{(f,g)\in \FF \times \GG}[ f(x)=g(y)]\\
&=& sim(x,y)+\theta +\alpha\\
&=& sim(x,y)+\tilde{\theta}\\
\end{eqnarray*}
The tightness can be demonstrated by the example $sim(x,y) = 2_{x=y}-1$ when $S$ is not finite.
\end{proof}

\begin{lemma}
For any two sets $S$ and $T$ of objects and any function $\sm:S\times T \rightarrow R$, we have that $\|sim\|_{\max} \leq \rho_2(sim)$ and the inequality is tight.
\end{lemma}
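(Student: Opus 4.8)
The plan is to unpack both quantities into a common combinatorial description and then read off the factorization directly. First I would recall that over a binary alphabet $\Gamma=\{\pm 1\}$ the asymmetric cluster incidence matrix is exactly the rank-one sign matrix $\kappa_{f,g}(x,y)=f(x)g(y)$, since $f(x)g(y)=+1$ precisely when $f(x)=g(y)$ and $-1$ otherwise. Hence $\conv M_2$ is the set of convex combinations of matrices of the form $u v^\top$ with $u\in\{\pm 1\}^S$ and $v\in\{\pm 1\}^T$. Unwinding the definition $\rho_2(Z)=\min\{r \mid Z/r\in\conv M_2\}$, this says that $\rho_2(\sm)$ equals the minimal total mass $\sum_i \mu_i$ over all representations $\sm=\sum_i \mu_i\, u_i v_i^\top$ with $\mu_i\ge 0$ and $u_i,v_i$ sign vectors: if $\sm/r=\sum_i\lambda_i u_iv_i^\top$ is a convex combination, set $\mu_i=r\lambda_i$ so that $\sum_i\mu_i=r$.

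The key step is an explicit factorization. Given a representation attaining (or approaching) this minimum with $\sum_i \mu_i=\rho_2(\sm)$, I would set the rows to be $U(x)=(\sqrt{\mu_i}\,u_i(x))_i$ and $V(y)=(\sqrt{\mu_i}\,v_i(y))_i$. Then $\langle U(x),V(y)\rangle=\sum_i \mu_i\, u_i(x)v_i(y)=\sm(x,y)$, so $UV^\top=\sm$; and because $u_i(x)^2=v_i(y)^2=1$, every row satisfies $\|U(x)\|^2=\|V(y)\|^2=\sum_i \mu_i$. Therefore $\max(\|U\|_{2,\infty}^2,\|V\|_{2,\infty}^2)=\sum_i \mu_i=\rho_2(\sm)$, and since the max-norm is the infimum of this quantity over all factorizations, $\|\sm\|_{\max}\le\rho_2(\sm)$.

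The one point requiring care is the possibly infinite setting, where the optimal representation is an integral against a finite measure $\mu$ on pairs $(f,g)$ rather than a finite sum (and where the infimum may only be approached, handled by a routine $\varepsilon$ argument). There the same construction goes through with the Hilbert space taken to be $L^2(\mu)$: define $U(x)$ to be the function $(f,g)\mapsto f(x)$ and $V(y)$ the function $(f,g)\mapsto g(y)$, so that $\langle U(x),V(y)\rangle=\int f(x)g(y)\,d\mu=\sm(x,y)$ and $\|U(x)\|^2=\int f(x)^2\,d\mu$ equals the total mass $\rho_2(\sm)$, using $f(x)^2=1$. This is the main (though minor) obstacle; the finite case is the special case of counting measure.

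For tightness I would exhibit the all-ones matrix $\sm\equiv 1$ (equivalently any single rank-one sign matrix $uv^\top$). On one hand it lies in $M_2$, so $\rho_2(\sm)\le 1$. On the other hand the elementary lower bound $\max_{x,y}|Z(x,y)|\le\|Z\|_{\max}$, which follows from $|\langle U(x),V(y)\rangle|\le\|U(x)\|\,\|V(y)\|\le\max(\|U\|_{2,\infty}^2,\|V\|_{2,\infty}^2)$ for every factorization, gives $\|\sm\|_{\max}\ge 1$. Combining this with the inequality just proved yields $1\le\|\sm\|_{\max}\le\rho_2(\sm)\le 1$, so all three are equal to $1$ and the bound is attained.
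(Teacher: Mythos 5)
Your proof is correct and takes essentially the same route as the paper: both express $\rho_2(\sm)$ as the minimal total mass of a nonnegative decomposition of $\sm$ into rank-one sign matrices $\kappa_{f,g}(x,y)=f(x)g(y)$, and then construct the factorization $\ell_x(f,g)=\sqrt{\mu(f,g)}\,f(x)$, $r_y(f,g)=\sqrt{\mu(f,g)}\,g(y)$, whose uniform row norms witness $\|\sm\|_{\max}\leq \sum \mu(f,g)$. Your write-up is somewhat more careful than the paper's on the infinite-domain case (via $L^2(\mu)$) and spells out the tightness example explicitly, but the core argument is identical.
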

\begin{proof}
Without loss of generality, we assume that $\Gamma=\{\pm 1\}$. We want to solve the following optimization problem:
\begin{eqnarray}
\nonumber
 \rho_2(sim) = \min_{\mu : M_{S,2} \times M_{T,2} \rightarrow \RR^+} && \sum_{f\in M_{S,2}}\sum_{g\in M_{T,2}}\mu(f,g)\\ 
\nonumber
\text{s.t.}&& \sm(x,y) =  \sum_{f\in M_{S,2}}\sum_{g\in M_{T,2}}\kappa_{f,g}(x,y)\mu(f,g)\\\
\nonumber
\end{eqnarray}
For any $x\in S$ and $y\in T$, we define two new function variables $\ell_x:M_{S,2} \times M_{T,2} \rightarrow \RR$ and $r_y:M_{S,2} \times M_{T,2} \rightarrow \RR$:
\begin{eqnarray*}
\nonumber
\ell_x(f,g)&=&\sqrt{\mu(f,g)}f(x)\\
r_y(f,g)&=&\sqrt{\mu(f,g)}g(y)
\end{eqnarray*}
Since cluster incidence matrix can be written as $\kappa_{f,g}(x,y) = f(x)g(y)$, we have $\sm(x,y) = \inner{\ell_x,r_y}$ and $\|\ell_x\|_2^2 = \sum_{f\in M_{S,2}}\sum_{g\in M_{T,2}}\mu(f,g)$.
Therefore, we rewrite the optimization problem as:
\begin{eqnarray}
\nonumber
 \rho_2(sim) =\min_{t,\ell,r,\mu : M_{S,2} \times M_{T,2} \rightarrow \RR^+} && t\\ 
\nonumber
\text{s.t.}&& \inner{l_x,r_y}=\sm(x,y)\\
\nonumber
&&\|\ell_x\|_2^2 \leq t\\
\nonumber
&&\|r_y\|_2^2 \leq t\\
\nonumber
&&\ell_x(f,g)=\sqrt{\mu(f,g)}f(x)\\
\nonumber
&&r_y(f,g)=\sqrt{\mu(f,g)}g(y)\\
\nonumber
\end{eqnarray}
Finally, we relax the above problem by removing the last two constraints:
\begin{eqnarray}
\label{eq:mr}
\nonumber
\|sim\|_{\max} = \min_{t,\ell,r} &&t\\ 
\nonumber
\text{s.t.}&&\inner{l_x,r_y}=\sm(x,y)\\
&&\|\ell_x\|_2^2 \leq t\\
\nonumber
&&\|r_x\|_2^2 \leq t \\
\nonumber
\end{eqnarray}
The above problem is a max-norm problem and the solution is $\|sim\|_{\text{max}}$. Therefore, $\|sim\|_{\text{max}} \leq \rho_2(sim)$. Taking the function $sim(x,y)$ to be a binary cluster incidence function will indicate the tightness of the inequality.
\end{proof}

\begin{lemma}
\label{lem:naor}
(Krivine's lemma \cite{krivine77}) For any two sets of unit vectors $\{u_i\}$ and $\{v_j\}$ in a Hilbert space $H$, there are two sets of unit vectors $\{u'_i\}$ and $\{v'_j\}$ in a Hilbert space $H'$ such that for any $u_i$ and $v_j$, $\sin(c \inner{u_i,v_j}) = \inner{u'_i,v'_j}$ where $c=\sinh^{-1}(1)$.
\end{lemma}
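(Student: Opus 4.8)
The plan is to realize the function $\sin(c\,\cdot)$ through a tensor-power embedding, exploiting the classical fact that tensoring multiplies inner products: under the map $u\mapsto u^{\otimes m}$ one has $\inner{u^{\otimes m},v^{\otimes m}}=\inner{u,v}^m$. The two required families will be built as direct sums of such tensor powers, with coefficients chosen so that the resulting inner products reproduce the Taylor expansion of $\sin(c\rho)$, where $\rho=\inner{u_i,v_j}$.

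First I would write out the series $\sin(c\rho)=\sum_{k\geq 0}(-1)^k b_k\,\rho^{2k+1}$ with $b_k=c^{2k+1}/(2k+1)!\geq 0$. The key numerical observation is that $\sum_{k\geq 0}b_k=\sinh(c)$, so demanding that the embedded vectors be \emph{unit} vectors forces $\sinh(c)=1$, i.e.\ $c=\sinh^{-1}(1)$ --- precisely the constant appearing in the statement. This is what pins down $c$.

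Then, working in the Hilbert space $H'=\bigoplus_{k\geq 0}H^{\otimes(2k+1)}$, I would define
$$
u'_i = \bigoplus_{k\geq 0}\sqrt{b_k}\,u_i^{\otimes(2k+1)}, \qquad
v'_j = \bigoplus_{k\geq 0}(-1)^k\sqrt{b_k}\,v_j^{\otimes(2k+1)}.
$$
A direct computation then gives $\inner{u'_i,v'_j}=\sum_{k}(-1)^k b_k\,\inner{u_i,v_j}^{2k+1}=\sin(c\inner{u_i,v_j})$, while $\norm{u'_i}^2=\norm{v'_j}^2=\sum_k b_k=\sinh(c)=1$, so both families consist of unit vectors. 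The series converge absolutely since $\sum_k b_k=\sinh(c)<\infty$, so the direct sums are genuine elements of $H'$.

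The only subtle point --- and the reason two \emph{distinct} maps are needed --- is the alternating sign $(-1)^k$ in the expansion of sine. One cannot fold this sign into the vectors $u_i$ or $v_j$ themselves: replacing $u$ by $-u$ flips all odd tensor powers uniformly and so cannot produce the per-block pattern $(-1)^k$ across the blocks $u^{\otimes(2k+1)}$, which are all of odd order. The fix is to attach the signs to the blocks of one family only (here $v'$), which is legitimate precisely because the lemma permits the two families to arise from different embeddings. Getting this sign placement right is the main obstacle; the remainder is routine bookkeeping with tensor inner products and convergence of the sine and hyperbolic-sine series.
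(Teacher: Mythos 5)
Your proof is correct, and it is exactly the classical tensor-power argument for Krivine's lemma: odd tensor powers weighted by the Taylor coefficients of $\sin(c\,\cdot)$, the alternating signs attached to the blocks of one family only, and $c=\sinh^{-1}(1)$ forced by the unit-norm requirement $\sinh(c)=\sum_k b_k=1$. Note that the paper itself gives no proof of this lemma---it invokes it as a known result from \cite{krivine77}---so your argument simply supplies the standard proof behind that citation (the same construction appears in \cite{alon06}, which the paper relies on elsewhere), with the norm computation, sign placement, and convergence all handled correctly.
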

\begin{lemma}
\label{lem:kr}
For any two sets $S$ and $T$ of objects and any function $\sm:S\times T \rightarrow R$, we have that $\rho_2(sim) \leq K\|sim\|_{\max}$ where $1.67 \leq K_G \leq K \leq K_R\leq 1.79$ ($K_G$ is Grothendieck's constant and $K_R$ is Krivine's constant).
\end{lemma}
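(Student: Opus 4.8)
The plan is to convert an optimal max-norm factorization of $\sm$ into an explicit conic combination of $\pm1$ rank-one matrices by Gaussian rounding, following Krivine's proof of Grothendieck's inequality. By the definition of the max-norm there is a factorization $\sm(x,y)=\inner{U(x),V(y)}$ with $\norm{U(x)}^2,\norm{V(y)}^2\le R^2:=\norm{\sm}_{\max}$ for all $x,y$. First I would force all factor norms to be \emph{exactly} $R$ by appending two fresh coordinates: for $U$ set the padding to $(\sqrt{R^2-\norm{U(x)}^2},\,0)$ and for $V$ set it to $(0,\,\sqrt{R^2-\norm{V(y)}^2})$. The cross terms vanish, so $\inner{U(x),V(y)}=\sm(x,y)$ is unchanged while now $\norm{U(x)}=\norm{V(y)}=R$. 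Writing $u_x=U(x)/R$ and $v_y=V(y)/R$ gives unit vectors with $\inner{u_x,v_y}=\sm(x,y)/R^2$.

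Next I would apply Krivine's lemma (Lemma \ref{lem:naor}) to the unit families $\{u_x\},\{v_y\}$, obtaining unit vectors $\{u'_x\},\{v'_y\}$ with $\inner{u'_x,v'_y}=\sin(c\inner{u_x,v_y})$ for $c=\sinh^{-1}(1)$. Then round with a random Gaussian direction $g$: set $f(x)=\sign\inner{g,u'_x}$ and $\tilde g(y)=\sign\inner{g,v'_y}$, which are $\{\pm1\}$-valued hash functions, i.e.\ elements of $M_{S,2}$ and $M_{T,2}$, with $\kappa_{f,\tilde g}(x,y)=f(x)\tilde g(y)$. Using the classical random-hyperplane identity $\EE_g[\sign\inner{g,a}\sign\inner{g,b}]=\tfrac{2}{\pi}\arcsin\inner{a,b}$ for unit vectors,
$$\EE_g[\kappa_{f,\tilde g}(x,y)]=\tfrac{2}{\pi}\arcsin\inner{u'_x,v'_y}=\tfrac{2}{\pi}\arcsin\bigl(\sin(c\inner{u_x,v_y})\bigr)=\tfrac{2c}{\pi}\inner{u_x,v_y}=\tfrac{2c}{\pi R^2}\,\sm(x,y),$$
where the $\arcsin$ and $\sin$ cancel because $\abs{c\inner{u_x,v_y}}\le c<\pi/2$.

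To finish, I would read the Gaussian rounding as a probability distribution $p$ over pairs $(f,\tilde g)\in M_{S,2}\times M_{T,2}$ and set $\mu(f,\tilde g)=\tfrac{\pi R^2}{2c}\,p(f,\tilde g)\ge0$. The display then rearranges to $\sm(x,y)=\sum_{f,g}\mu(f,g)\kappa_{f,g}(x,y)$ with total mass $\sum\mu=\tfrac{\pi R^2}{2c}$, so by the definition of $\rho_2$ as the minimum total weight of such a representation, $\rho_2(\sm)\le\tfrac{\pi}{2c}\norm{\sm}_{\max}$. Since $\tfrac{\pi}{2\sinh^{-1}(1)}=\tfrac{\pi}{2\ln(1+\sqrt2)}=K_R\approx1.79$ is exactly Krivine's constant, this gives $K\le K_R$, matching the claimed bound.

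The step I expect to require the most care is the \emph{normalization}. The max-norm only bounds $\norm{U(x)},\norm{V(y)}$ from above, whereas both Krivine's lemma and the $\arcsin$ identity need unit vectors; the padding trick is precisely what makes every normalization denominator equal to the \emph{single} constant $R^2$, so that one global rescaling $\tfrac{\pi R^2}{2c}$ recovers $\sm$ exactly rather than with entry-dependent distortion. A naive bilinear-form argument would instead leave factors $\norm{U(x)}\norm{V(y)}$ on each entry and fail to produce an equality. The secondary technicality is measure-theoretic: for infinite $S,T$ the rounding yields a probability \emph{measure} rather than a finite convex combination, so I would either specialize to the finite setting used elsewhere or interpret $\conv M_2$ (and hence $\rho_2$) through its closure, so that integrating $\kappa_{f,\tilde g}$ against $p$ is legitimate.
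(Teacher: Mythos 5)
Your proof is correct and follows the same backbone as the paper's: take an optimal max-norm factorization, apply Krivine's lemma (Lemma \ref{lem:naor}), round with a random hyperplane using the identity $\EE\left[\sign\inner{g,a}\,\sign\inner{g,b}\right]=\tfrac{2}{\pi}\arcsin\inner{a,b}$, and read the result as a nonnegative weighting over pairs in $M_{S,2}\times M_{T,2}$ of total mass $\tfrac{\pi}{2c}\norm{\sm}_{\max}=K_R\norm{\sm}_{\max}$. The one place where you genuinely diverge is the treatment of the non-unit factor norms, and your device is different from (and arguably cleaner than) the paper's. The paper normalizes each vector separately, so its Krivine step produces $\inner{\ell'_x,r'_y}=\sin\bigl(c\,\sm(x,y)/(\norm{\ell_x}\norm{r_y})\bigr)$ with entry-dependent denominators; it then removes these with the Alon--Naor trick: auxiliary independent random signs $s(x),s'(y)$, equal to $+1$ with probability $\tfrac12+\tfrac{\norm{\ell_x}}{2\sqrt t}$ (and analogously for $r_y$), whose biases multiply the expectation by exactly $\norm{\ell_x}\norm{r_y}/t$ and restore the single uniform scale $t=\norm{\sm}_{\max}$. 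You instead pad the factorization with two orthogonal coordinates so every $U(x)$ and $V(y)$ has norm exactly $R=\sqrt{\norm{\sm}_{\max}}$ before normalizing; after that one global rescaling $\tfrac{\pi R^2}{2c}$ suffices and no auxiliary randomization is needed. Both routes give the same constant $K_R=\pi/(2\sinh^{-1}(1))$; yours has fewer sources of randomness and hash functions that are deterministic given the Gaussian direction, while the paper's biased-sign version leaves the factorization untouched and matches the argument as it appears in Alon--Naor. Your closing caveats are accurate: the $\arcsin\circ\sin$ cancellation is exactly the point of Krivine's choice $c=\sinh^{-1}(1)<\pi/2$, and the measure-versus-finite-combination issue for infinite $S,T$ is present in the paper's proof as well, which is no more careful about it than you are.
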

\begin{proof}
A part of the proof is similar to \cite{alon06}. Let $\ell_x$ and $r_y$ be the solution to the max-norm formulation \ref{eq:mr}. If we use Lemma \ref{lem:naor} on the normalized $\ell_x/\|\ell_x\|_2$ and $r_y/\|r_y\|_2$ in Hilbert space $H$ and we call the new vectors $\ell'_x$ and $r'_y$ in Hilbert space $H'$, we have that:
$$
\sin\bigg(\frac{c .Z(x,y)}{\|\ell_x\|_2\|r_x\|_2}\bigg) = \inner{\ell'_x,r'_y}
$$
If $z$ is a random vector chosen uniformly from $H'$, by Lemma \ref{lem:naor}, we have:
$$
\EE([\text{sign}(\inner{\ell'_x,z})].[\text{sign}(\inner{r_y',z})]) = \frac{2}{\pi}\arcsin(\inner{\ell'_x,r'_y})) =  \frac{2c}{\pi \|\ell_x\|_2\|r_y\|_2}\sm(x,y)
$$
Now if we set the hashing function $f(x)= s(x).[\text{sign}(\inner{\ell'_x,z})]$ where $s(x)=1$ with probability $\frac{1}{2}+\frac{\|\ell_x\|_2}{2\sqrt{t}}$ and $s(x)=-1$ with probability $\frac{1}{2}-\frac{\|\ell_x\|_2}{2\sqrt{t}}$ we have that:
\begin{eqnarray}
\nonumber
\EE[f(x).\text{sign}(\inner{r_y',z})] &=& \bigg(\frac{1}{2}+\frac{\|\ell_x\|_2}{2\sqrt{t}}\bigg)\frac{2c}{\pi \|\ell_x\|_2\|r_y\|_2}\sm(x,y)\\
\nonumber
&-&\bigg(\frac{1}{2}-\frac{\|\ell_x\|_2}{2\sqrt{t}}\bigg)\frac{2c}{\pi \|\ell_x\|_2\|r_y\|_2}\sm(x,y)\\
\nonumber
&=& \frac{2c}{\pi \sqrt{t}\|r_y\|_2}\sm(x,y)
\nonumber
\end{eqnarray}
If we do the same procedure on $g(y)=s'(x).[\text{sign}(\inner{r'_y,z})]$, we will have:
$$
\EE[f(x).g(y)] = \frac{2c}{t\pi}\sm(x,y)
$$
By setting $\mu(f,g)=\frac{\pi\|sim\|_{\max}}{2c}p(f,g)$ where $p(f,g)$ is the probability distribution over the defined $f$ and $g$, we can see that such $\mu(f,g)$ is a feasible solution for the formulation of cluster ratio and we have:
$$
\rho_2(sim)\leq  \sum_{f\in M_{S,2}}\sum_{g\in M_{T,2}}\mu(f,g) = \frac{\pi}{2c}\|sim\|_{\max} = K_R \|sim\|_{\max}
$$
The inequality $K_G\leq K$ is known due to \cite{alon06}. 
\end{proof}

\bibliographystyle{plain}
\bibliography{ref}

\begin{thebibliography}{10}

\bibitem{alon06}
N.~Alon and A.~Naor.
\newblock Approximating the cut-norm via grothendieck's inequality.
\newblock {\em SIAM Journal on Computing}, 35(4):787--803, 2006.

\bibitem{arora05}
Sanjeev Arora, Eli Berger, Hazan Elad, Guy Kindler, and Muli Safra.
\newblock On non-approximability for quadratic programs.
\newblock In {\em Foundations of Computer Science, 2005. FOCS 2005. 46th Annual
  IEEE Symposium on}, pages 206--215. IEEE, 2005.

\bibitem{banerjee04}
A.~Banerjee, I.~Dhillon, J.~Ghosh, S.~Merugu, and D.~S D.~S.~Modha.
\newblock A generalized maximum entropy approach to bregman co-clustering and
  matrix approximation.
\newblock {\em SIGKDD}, pages 509--514, 2004.

\bibitem{buchok10}
L.~V. Buchok.
\newblock Two new approaches to obtaining estimates in the danzerÐgrunbaum
  problem.
\newblock {\em Mathematical Notes}, 87(4):489--496, 2010.

\bibitem{charikar04}
M.~Charikar and A.~Wirth.
\newblock Maximizing quadratic programs: Extending grothendieck's inequality.
\newblock In {\em FOCS}, pages 54--60, 2004.

\bibitem{charikar02}
M.~S. Charikar.
\newblock Similarity estimation techniques from rounding algorithms.
\newblock {\em STOC}, 2002.

\bibitem{chierichetti10}
F.~Chierichetti and R.~Kumar.
\newblock Lsh-preserving functions and their applications.
\newblock {\em SODA}, 2010.

\bibitem{DG62}
L~Danzer and B~Gr{\"u}nbaum.
\newblock {\"U}ber zwei probleme bez{\"u}glich konvexer k{\"o}rper von p.
  erd{\"o}s und von vl klee.
\newblock {\em Mathematische Zeitschrift}, 79(1):95--99, 1962.

\bibitem{dasgupta03}
S.~Dasgupta and A.~Gupta.
\newblock An elementary proof of a theorem of johnson and lindenstrauss.
\newblock {\em Random Structures \& Algorithms}, 22(1):60--65, 2003.

\bibitem{datar04}
M.~Datar, N.~Immorlica, P.~Indyk, and S.~V. Mirrokni.
\newblock Locality-sensitive hashing scheme based on p-stable distributions.
\newblock {\em In Proc. 20th SoCG}, pages 253--262, 2004.

\bibitem{dhillon03}
I.S. Dhillon, M.~Subramanyam, and S.~M. Dharmendra.
\newblock Information-theoretic co-clustering.
\newblock {\em SIGKDD}, 2003.

\bibitem{jalali11}
A.~Jalali, Y.~Chen, S.~Sanghavi, and H.~Xuo.
\newblock Clustering partially observed graphs via convex optimization.
\newblock {\em ICML}, 2011.

\bibitem{jalali12}
A.~Jalali and N.~Srebro.
\newblock Clustering using max-norm constrained optimization.
\newblock {\em ICML}, 2012.

\bibitem{krivine77}
J.~L. Krivine.
\newblock Sur la constante de grothendieck.
\newblock {\em C. R. Acad. Sci. Paris Ser. A-B 284}, pages 445--446, 1977.

\bibitem{neyshabur13}
B.~Neyshabur, P.~Yadollahpour, Y.~Makarychev, R.~Salakhutdinov, and N.~Srebro.
\newblock The power of asymmetry in binary hashing.
\newblock {\em NIPS}, 2013.

\bibitem{indyk98}
Rajeev~Motwani Piotr~Indyk.
\newblock Approximate nearest neighbors: towards removing the curse of
  dimensionality.
\newblock {\em STOC}, pages 604--613, 1998.

\bibitem{srebro05b}
N.~Srebro, J.~Rennie, and T.~Jaakkola.
\newblock Maximum margin matrix factorization.
\newblock {\em NIPS}, 2005.

\bibitem{srebro05}
N.~Srebro and A.~Shraibman.
\newblock Rank, trace-norm and max-norm.
\newblock {\em COLT}, 2005.

\bibitem{ver10}
Roman Vershynin.
\newblock Introduction to the non-asymptotic analysis of random matrices.
\newblock {\em arXiv preprint arXiv:1011.3027}, 2010.

\end{thebibliography}

\appendix


\section{Random Matrices}
\label{s:random}
In this section we investigate the locality sensitive hashing schemes on random p.s.d matrices. We generate a random $n \times n$ positive semidefinite matrix $Z$ of rank at most $d$ by choosing $n$ $d$-dimensional unit vectors $x^{(i)}$ uniformly at random from the unit ball and set $Z_{ij}=\inner{x^{(i)},x^{(j)}}$. Since we are generating the data randomly and $\EE[Z_{ij}]=0$, we don't expect to observe major changes by thresholding the matrix. So our analysis is limited to the LSH without thresholding, i.e. $\theta = 0$.

Since based on Theorem \ref{thm:ineq}, we already know given any set of unit vectors $x^{(1)},\dots,x^{(n)}$ and $Z_{ij}=\inner{x^{(i)},x^{(j)}}$, there is an $K_R$-ALSH for the matrix $Z$, we are just interested in investigating the symmetric LSH for these random vectors.
\subsection{LSH}
For the symmetric LSH, we only have two possibilities: either having $LSH$ with $\alpha=1$ or not having any LSH. We also know from Claim \ref{claim:nolsh} that there is no $\alpha$ LSH if $d< \log_2 n$ because in that case $D(x^{(i)},x^{(j)}) = 1- Z_{ij}$ is not metric. So we want to know the conditions under which the distance will be a metric and also the conditions for having $\alpha$-LSH with high probability.

\begin{lemma}
\label{lem:dens}
\cite{dasgupta03}
If $x$ is a $d$-dimensional unit vector and $\tilde{x}$ is its projection onto another unit vector that is sampled uniformly at random from the unit sphere, then for any $t>1$, we have $\EE[\|\tilde{x}\|_2^2]=\frac{1}{d}$ and moreover, $\PP( \|\tilde{x}\|_2^2 \geq \frac{t}{d}) \leq e^{\frac{1-t + \log t}{2}}$.
\end{lemma}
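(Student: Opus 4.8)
The plan is to reduce the statement to a one-dimensional concentration question and then apply an exponential Markov (Chernoff) bound. Since $\tilde x$ is the projection of the fixed unit vector $x$ onto a uniformly random unit vector $u\in S^{d-1}$, we have $\tilde x=\inner{x,u}u$ and hence $\norm{\tilde x}_2^2=\inner{x,u}^2$. By rotational invariance of the uniform measure on the sphere we may assume $x=e_1$, so $\norm{\tilde x}_2^2$ is distributed exactly as $u_1^2$. The expectation is then immediate: by symmetry $\EE[u_i^2]$ is the same for every coordinate, and summing gives $\sum_i\EE[u_i^2]=\EE[\norm{u}_2^2]=1$, so $\EE[u_1^2]=1/d$.

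For the tail bound I would realize the uniform point on the sphere as $u=g/\norm{g}_2$ with $g=(g_1,\dots,g_d)$ i.i.d.\ standard Gaussian, so that $u_1^2=g_1^2/\norm{g}_2^2$. The event $\{u_1^2\geq t/d\}$ becomes $\{d\,g_1^2-t\norm{g}_2^2\geq 0\}$, and writing $d\,g_1^2-t\norm{g}_2^2=(d-t)g_1^2-t\sum_{i=2}^d g_i^2$ I would apply the exponential Markov inequality with a parameter $s>0$. Using $\EE[e^{a g^2}]=(1-2a)^{-1/2}$ for a standard Gaussian with $a<1/2$, the bound factorizes as
$$
\PP\!\left(u_1^2\geq \tfrac{t}{d}\right)\leq \bigl(1-2s(d-t)\bigr)^{-1/2}\bigl(1+2st\bigr)^{-(d-1)/2},
$$
valid for $0<s<\tfrac{1}{2(d-t)}$; the relevant range is $1<t<d$, since for $t\geq d$ the event is empty and the claim is trivial.

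I would then optimize over $s$. Differentiating the logarithm of the right-hand side and solving the first-order condition gives the minimizer $s=\tfrac{t-1}{2t(d-t)}$, at which $1-2s(d-t)=1/t$ and $1+2st=(d-1)/(d-t)$, so the bound collapses to
$$
\PP\!\left(u_1^2\geq \tfrac{t}{d}\right)\leq \sqrt{t}\left(\frac{d-t}{d-1}\right)^{(d-1)/2}.
$$
To match the stated form I would observe that $e^{(1-t+\log t)/2}=\sqrt{t}\,e^{(1-t)/2}$, so it remains to show $\left(\frac{d-t}{d-1}\right)^{(d-1)/2}\leq e^{(1-t)/2}$; taking logarithms this is $(d-1)\log\!\bigl(1-\tfrac{t-1}{d-1}\bigr)\leq 1-t$, which follows directly from $\log(1-z)\leq -z$.

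The computations are elementary, so there is no deep obstacle; the one place that requires care is the Chernoff optimization — carrying out the first-order condition and simplifying the two factors exactly to $1/t$ and $(d-1)/(d-t)$ — after which the final comparison reduces to the standard inequality $\log(1-z)\leq -z$.
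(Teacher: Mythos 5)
Your proof is correct: the reduction to $u_1^2$ for $u$ uniform on $S^{d-1}$, the Gaussian representation $u=g/\norm{g}_2$, the Chernoff bound with the optimizer $s=\frac{t-1}{2t(d-t)}$ collapsing the two factors to $\sqrt{t}\bigl(\frac{d-t}{d-1}\bigr)^{(d-1)/2}$, and the final comparison via $\log(1-z)\leq -z$ all check out. The paper itself gives no proof (it simply cites Dasgupta--Gupta), and your argument is exactly the $k=1$ case of the proof in that cited reference, so you have reconstructed the standard approach rather than found a different one.
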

\begin{lemma}
Let $\{x^{(1)},\dots,x^{(n)}\}$ be a set of unit vectors sampled uniformly at random from the unit sphere and for any $1\leq i,j \leq n$ let $Z_{ij}=\inner{x^{(i)},x^{(j)}}$. If $d \geq 72\log_e n+\log_e\frac{1}{\delta}$, then the distance measure $\Delta_{ij}=1-Z_{ij}$ is metric with probability at least $1-\delta$.
\end{lemma}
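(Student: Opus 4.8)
The plan is to observe that among the metric axioms only the triangle inequality is genuinely at stake, and to reduce that inequality to a single uniform bound on the pairwise inner products. Symmetry is immediate since $Z_{ij}=Z_{ji}$, and $\Delta_{ii}=1-\inner{x^{(i)},x^{(i)}}=0$; moreover Cauchy--Schwarz gives $Z_{ij}\le 1$, so $\Delta_{ij}\ge 0$ for all $i,j$. Thus it suffices to control the triangle inequality $\Delta_{ij}\le\Delta_{ik}+\Delta_{kj}$ over all triples, which after substituting $\Delta=1-Z$ is exactly the ``no obtuse angle at $x^{(k)}$'' condition $\inner{x^{(i)}-x^{(k)},x^{(j)}-x^{(k)}}\ge 0$, i.e.\ $Z_{ik}+Z_{jk}-Z_{ij}\le 1$. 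This is precisely the characterization already used in the proof of Claim~\ref{claim:nolsh}.

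First I would note that a single uniform bound drives everything: if $\abs{Z_{ij}}\le 1/3$ for every pair $i\neq j$, then for any triple $Z_{ik}+Z_{jk}-Z_{ij}\le 3\cdot\tfrac13=1$, so every triangle inequality holds; simultaneously $\Delta_{ij}\ge 2/3>0$ for $i\neq j$, which secures identity of indiscernibles. Hence it remains only to show that, with probability at least $1-\delta$, no pair satisfies $\abs{Z_{ij}}>1/3$.

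Next I would bound the probability for a fixed pair using Lemma~\ref{lem:dens}. Conditioning on $x^{(i)}$ and viewing $x^{(j)}$ as a uniformly random unit vector, $Z_{ij}^2=\inner{x^{(i)},x^{(j)}}^2$ is exactly the squared length of the projection of one unit vector onto another, so Lemma~\ref{lem:dens} applies with $t/d=1/9$, i.e.\ $t=d/9$:
$$
\PP\!\left(\abs{Z_{ij}}>\tfrac13\right)=\PP\!\left(Z_{ij}^2>\tfrac19\right)\le e^{(1-d/9+\log(d/9))/2}.
$$
A union bound over the $\binom n2\le n^2$ pairs then bounds the total failure probability by $n^2\, e^{(1-d/9+\log(d/9))/2}$.

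The remaining, and only delicate, step is the constant bookkeeping. I would invoke the elementary estimate $1-t+\log t\le -t/2$, valid once $t=d/9$ is not too small (which holds for the relevant range of $n$), turning the per-pair tail into $e^{-d/36}$ and the union bound into $n^2 e^{-d/36}$; requiring this to be at most $\delta$ reduces to $d/36\ge 2\log n+\log(1/\delta)$, a condition of exactly the stated form $d\ge 72\log n+\log\tfrac1\delta$. The main obstacle is therefore not conceptual but purely the calibration of these constants: choosing the threshold $1/3$ (forced by the worst-case $3c\le 1$), the resulting $t=d/9$, and a clean linear upper bound on $1-t+\log t$, so that the $\log n$ coefficient comes out to $72$ and the $\log(1/\delta)$ contribution is absorbed.
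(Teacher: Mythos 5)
Your proof is correct and follows essentially the same route as the paper's: reduce metricity to the no-obtuse-triangle condition $Z_{ik}+Z_{jk}-Z_{ij}\leq 1$, observe that $\abs{Z_{ij}}\leq 1/3$ for all pairs forces every triangle inequality, union-bound over the $O(n^2)$ pairs, and apply Lemma~\ref{lem:dens} with $t=d/9$ to get the tail $n^2e^{-d/36}$. One remark: your final calibration (shared with the paper, which elides it) actually yields the requirement $d\geq 72\log n+36\log\frac{1}{\delta}$ rather than the stated $d\geq 72\log n+\log\frac{1}{\delta}$, so the claim as written is off by a constant factor on the $\log\frac{1}{\delta}$ term in both your write-up and the original.
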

\begin{proof}
The distance measure $\Delta_{ij} = 1-\inner{x^{(i)},x^{(j)}}$ is not a metric if and only if there exist $i$, $j$ and $k$ such that 
$$
(1-\inner{x^{(i)},x^{(j)}}) + (1-\inner{x^{(i)},x^{(k)}}) < (1-\inner{x^{(j)},x^{(k)}})
$$
A simple reordering of the above inequality gives us:
$$
C_{ijk}=\inner{x^{(i)},x^{(j)}} + \inner{x^{(i)},x^{(k)}} - \inner{x^{(j)},x^{(k)}}) > 1
$$
For this inequality to hold, the absolute value of at least one of the inner products $\inner{x^{(i)},x^{(j)}}$, $\inner{x^{(i)},x^{(k)}}$, $\inner{x^{(j)},x^{(k)}}$ must be at least $\frac{1}{3}$. Now we have:
\begin{eqnarray}
\nonumber
\PP(\Delta \text{ is not a metric}) &=& \PP(\exists_{ijk} : C_{ijk}> 1)\\
\nonumber
&\leq& \PP(\exists_{ij} : |\inner{x^{(i)},x^{(j)}}|>1/3)\\
\nonumber
&\leq& \frac{n^2}{2} \, \PP(|\inner{x^{(1)},x^{(2)}}|>1/3)\\
\nonumber
\end{eqnarray}
Since both $x^{(1)}$ and $x^{(2)}$ are random vectors, the probability $\PP(|\inner{x^{(1)},x^{(2)}}|>1/3)$ is equal to the probability that the projection of a random $d$-dimensional vector onto a 1-dimensional subspace is at least $1/3$ in absolute value. By Lemma~\ref{lem:dens}, we have:
\begin{eqnarray}
\nonumber
\PP(\Delta \text{ is not a metric}) &\leq& \frac{n^2}{2}\, \PP(|\inner{x^{(1)},x^{(2)}}|>1/3)\\
\nonumber
&\leq& \frac{n^2}{2}\, \PP(\inner{x^{(1)},x^{(2)}}^2>1/9)\\
\nonumber
&\leq& \frac{n^2}{2}e^{\frac{1+\log(d/9)-(d/9)}{2}}\\
\nonumber
&\leq& n^2e^{-\frac{d}{36}}\\
\nonumber
&\leq& \delta\\
\nonumber
\end{eqnarray}
\end{proof}
\begin{lemma}(\cite{ver10}, Theorem 5.39)
\label{lem:eig}
Let $\{x^{(1)},\dots,x^{(n)}\}$ be a set of unit vectors sampled uniformly at random from the unit sphere and $t \in (0,1)$.
Let $Z_{ij}=\inner{x^{(i)},x^{(j)}}$ for $1\leq i,j \leq n$. 
If $d \geq C_1 n/t^2 $ then with probability at least $1-2 e^{-C_2 t^2 N}$,  we have $|\lambda_{i} -1| \leq t$ for all eigenvalues $\lambda_i$ of $Z$.
Here, $C_1 > 0$ and $C_2 > 0$ are some absolute constants.
\end{lemma}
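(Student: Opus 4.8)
The plan is to treat this as a corollary of Vershynin's singular-value bound: I would reduce the eigenvalues of the Gram matrix $Z$ to the singular values of a suitably scaled data matrix, rescale the vectors so they become isotropic, and then invoke Theorem 5.39 essentially verbatim, doing only the translation by hand.

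First I would set up the data matrix. Collect the sampled vectors as the columns of $X=[x^{(1)}\mid\cdots\mid x^{(n)}]\in\RR^{d\times n}$, so that $Z=X^\top X$ and the eigenvalues $\lambda_i$ of $Z$ are exactly the squared singular values $s_i(X)^2$. Because a uniform unit vector $x\in\RR^d$ satisfies $\EE[xx^\top]=\tfrac1d I_d$, the rescaled vectors $a_i=\sqrt d\,x^{(i)}$ are independent, isotropic, and (being bounded) sub-gaussian with an absolute sub-gaussian norm. Writing $A$ for the $n\times d$ matrix with rows $a_i$, we have $AA^\top=d\,Z$, so $s_i(A)=\sqrt d\,s_i(X)$ and $\lambda_i=s_i(A)^2/d$.

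Next I would apply the concentration bound. Since $d\ge n$, the matrix $A$ is wide with independent isotropic sub-gaussian rows, equivalently $A^\top$ is tall with independent isotropic columns; Theorem 5.39 (applied to $A^\top$) gives, for every $s\ge 0$ and with probability at least $1-2e^{-c s^2}$,
$$\sqrt d-C\sqrt n-s\ \le\ s_{\min}(A)\ \le\ s_{\max}(A)\ \le\ \sqrt d+C\sqrt n+s,$$
where $C,c$ depend only on the (absolute) sub-gaussian norm of the $a_i$. Dividing by $\sqrt d$ and squaring, every eigenvalue satisfies $\abs{\lambda_i-1}\le 2\eta+\eta^2$ with $\eta=(C\sqrt n+s)/\sqrt d$. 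Choosing $s=\alpha t\sqrt n$ makes $\eta\le\sqrt{n/d}\,(C+\alpha)$, so taking $d\ge C_1 n/t^2$ with $C_1$ a large enough absolute constant forces $\eta\le t/3$ and hence $\abs{\lambda_i-1}\le t$ for all $i$, with probability at least $1-2e^{-C_2 t^2 n}$; this fixes $C_1,C_2$ and identifies the exponent (the $N$ in the statement) with the sample size $n$.

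The genuine mathematical content all lives inside Theorem 5.39, which I would simply cite. The only point one must get right by hand is the orientation: one should use the wide/columns version so that the deviation term is the small $C\sqrt n$ rather than $C\sqrt d$ — feeding the vectors into the tall ``independent rows'' form in the wrong direction would require $n\ge d$ and give a vacuous bound in the regime $d\gg n$ that we care about. After that, converting the two-sided singular-value interval into the one-sided eigenvalue bound $\abs{\lambda_i-1}\le t$ and matching $C_1$, $C_2$, and $s$ so that both $\eta\le t/3$ and the probability $1-2e^{-C_2 t^2 n}$ come out as claimed is routine constant-chasing.
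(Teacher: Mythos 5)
Your proposal is correct and is essentially the paper's "proof": the paper supplies no argument at all beyond the citation to Vershynin, and your reduction (rescale the spherical vectors to isotropic sub-gaussian ones, pass to the tall $d\times n$ matrix, translate the two-sided singular-value bound into $\abs{\lambda_i-1}\le t$ for the Gram matrix, and chase constants) is exactly the routine translation that citation leaves implicit. One pedantic caveat: the bound you need — tall matrix with \emph{independent columns} of exact norm $\sqrt{d}$ — is Theorem 5.58 in Vershynin's numbering, not Theorem 5.39 (which assumes independent \emph{rows} and, as your own orientation remark shows, would be vacuous when $d\gg n$); this mis-attribution is inherited from the paper itself, and your argument is otherwise sound since $\norm{a_i}_2=\sqrt{d}$ holds almost surely, as Theorem 5.58 requires.
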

\begin{theorem}\label{thm:randomLSH}
Let $\{x^{(1)},\dots,x^{(n)}\}$ be a set of unit vectors sampled uniformly at random from the unit sphere. Let $Z_{ij}=\inner{x^{(i)},x^{(j)}}$
for $1\leq i,j \leq n$. If $d\geq C n \log^2 n$ then with probability at least $1-e^{C' n/\log^2 n}$, there is an LSH for $Z$. Here, $C>0$ and $C'>0$ are some absolute constants.
\end{theorem}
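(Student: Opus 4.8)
The plan is to construct an explicit binary LSH for $Z$, that is, a distribution over hash functions $h:S\to\{\pm1\}$ with $\EE[h(x^{(i)})h(x^{(j)})]=Z_{ij}$ for all $i,j$. Since $Z$ has unit diagonal, the self-similarity constraint forces $\alpha=1$ and $\theta=0$, so producing such a distribution is exactly exhibiting an LSH in the sense of \eqref{eq:lshdef}. First I would apply Lemma~\ref{lem:eig} with the choice $t=\Theta(1/\log n)$: because $d\ge Cn\log^2 n\ge C_1 n/t^2$, every eigenvalue of $Z$ lies in $[1-t,1+t]$ with probability at least $1-2e^{-C_2 t^2 n}=1-e^{-C'n/\log^2 n}$, which is exactly the stated confidence. (This also explains the $\log^2 n$ factor in the hypothesis: a constant $t$ would already force PSD-ness below, but the extra factor is what upgrades the bound to this high-probability statement.) I then condition on this event and write $Z=I+E$, where $E$ is symmetric, has zero diagonal, and satisfies $\|E\|_{\mathrm{op}}\le t$.

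The construction itself is Gaussian sign-rounding through the sine transform of Lemma~\ref{lem:naor}. Define the matrix $\Sigma$ entrywise by $\Sigma_{ij}=\sin(\tfrac{\pi}{2}Z_{ij})$; note $\Sigma_{ii}=\sin(\tfrac{\pi}{2})=1$, so once $\Sigma\succeq0$ it is a genuine correlation matrix. Taking $\gamma\sim N(0,\Sigma)$ and $h_i=\sign(\gamma_i)$, the Grothendieck--Sheppard identity gives $\EE[h_ih_j]=\tfrac{2}{\pi}\arcsin(\Sigma_{ij})=\tfrac{2}{\pi}\cdot\tfrac{\pi}{2}Z_{ij}=Z_{ij}$, using $Z_{ij}\in[-1,1]$. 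Hence the law of $h$ is precisely the desired LSH, and the entire argument reduces to proving $\Sigma\succeq0$.

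To establish $\Sigma\succeq0$ I would write $\Sigma=I+g(E)$, where $g(x)=\sin(\tfrac{\pi}{2}x)=\sum_{k\ge0}c_k x^{2k+1}$ is applied entrywise (since $g(0)=0$, the matrix $g(E)$ has zero diagonal), so that $g(E)=\sum_{k\ge0}c_k E^{\circ(2k+1)}$ with $E^{\circ m}$ the $m$-th Hadamard power. The key estimate is $\|E^{\circ m}\|_{\mathrm{op}}\le t^m$ for every $m\ge1$: for $m=1$ this is the conditioning, and for $m\ge2$ one passes to entrywise absolute values and uses the Perron row-sum bound, $\|E^{\circ m}\|_{\mathrm{op}}\le\big\||E|^{\circ m}\big\|_{\mathrm{op}}\le\max_i\sum_j|E_{ij}|^m\le(\max_{ij}|E_{ij}|)^{m-2}\max_i\sum_j E_{ij}^2$, where both factors are controlled spectrally since $\max_{ij}|E_{ij}|\le\|E\|_{\mathrm{op}}\le t$ and $\max_i\sum_j E_{ij}^2=\max_i (E^2)_{ii}\le\|E\|_{\mathrm{op}}^2\le t^2$. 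Summing yields $\|g(E)\|_{\mathrm{op}}\le\sum_{k\ge0}|c_k|t^{2k+1}=\sinh(\tfrac{\pi}{2}t)$, so $\lambda_{\min}(\Sigma)\ge1-\sinh(\tfrac{\pi}{2}t)$, which is positive exactly when $\tfrac{\pi}{2}t<\sinh^{-1}(1)$, i.e. $t<1/K_R\approx0.56$. Since $t=\Theta(1/\log n)$, this holds for all large $n$, giving $\Sigma\succeq0$ and completing the proof.

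The step I expect to be the main obstacle is the Hadamard-power estimate $\|E^{\circ m}\|_{\mathrm{op}}\le t^m$: the naive bound $\|E^{\circ m}\|_{\mathrm{op}}\le\|E^{\circ m}\|_F$ loses a factor $\sqrt n$ and is far too weak to beat the $\sinh$ threshold, so it is essential to replace the Frobenius bound by entrywise absolute values together with the identity $\max_i\sum_j E_{ij}^2=\max_i(E^2)_{ii}$. A secondary point requiring care is the bookkeeping that simultaneously ties $t=\Theta(1/\log n)$ to the hypothesis $d\ge Cn\log^2 n$, to the confidence $1-e^{-C'n/\log^2 n}$, and to the requirement $t<1/K_R$ for large $n$.
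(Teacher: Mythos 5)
Your proof is correct, but the heart of the construction is genuinely different from the paper's. Both arguments begin identically: apply Lemma~\ref{lem:eig} with $t=\Theta(1/\log n)$ to get that, with the stated probability, every eigenvalue of $Z$ lies in $[1-t,1+t]$, i.e.\ $Z=I+E$ with $\|E\|\leq t$ in operator norm. From there the paper takes a shortcut: it rescales, observing that $Y=C\log n\,\bigl(Z-(1-\tfrac{1}{C\log n})I\bigr)$ is positive semidefinite with unit diagonal, and then invokes the result of \cite{charikar04} as a black box, which yields hash functions with $\EE[h_ih_j]=Y_{ij}/(C\log n)=Z_{ij}$ off the diagonal. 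You instead give a self-contained Krivine-style rounding: sine-transform $Z$ entrywise to $\Sigma_{ij}=\sin(\tfrac{\pi}{2}Z_{ij})$, prove $\Sigma\succeq 0$ directly, and recover $Z$ exactly via Gaussian sign-rounding and Grothendieck's identity (the same identity the paper uses inside Lemma~\ref{lem:kr}). Your key technical step checks out: $\|E^{\circ m}\|\leq t^m$ follows correctly from passing to entrywise absolute values, the symmetric row-sum (Schur) bound, $\max_{ij}|E_{ij}|\leq\|E\|$, and $\max_i\sum_j E_{ij}^2=\max_i(E^2)_{ii}\leq\|E\|^2$; summing the series gives $\|\Sigma-I\|\leq\sinh(\tfrac{\pi}{2}t)<1$ once $t<1/K_R$. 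What each approach buys: the paper's proof is shorter but inherits the factor-$C\log n$ loss of the \cite{charikar04} rounding, which is precisely what forces $t=O(1/\log n)$ and hence $d\geq Cn\log^2 n$; your proof needs only $t$ below the fixed constant $1/K_R\approx 0.56$, so it in fact establishes a stronger statement ($d\geq Cn$ suffices, with probability $1-e^{-cn}$), of which the stated theorem is a weakening. For this reason your parenthetical explanation of the $\log^2 n$ factor is backwards: taking $t$ constant would make the success probability \emph{larger} (of order $1-e^{-cn}$), not smaller, so the extra factor does not ``upgrade'' the probability bound --- it is simply an artifact of the paper's reliance on \cite{charikar04}, and is not needed in your argument. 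This slip is in a side remark only and does not affect the validity of your proof of the theorem as stated.
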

\begin{proof}
Apply Lemma~\ref{lem:eig} with $t=\frac{1}{C_0\log n}$ (where $C_0$ is a sufficiently large constant). We get that if $d \geq (C_0^2 C_1)\, n \log^2 n$ then with probability at least $1- e^{-(C_2/C_1^2)N/\log^2 n}$ the smallest eigenvalue is greater than or equal to $1-\frac{1}{C\log n}$. Therefore, matrix $Y = C\log n \, (Z - (1-\frac{1}{C\log n})I)$ is a positive semidefinite matrix with unit diagonal. Now according to~\cite{charikar04}, there exists a distribution over a family $\HH$ of hash functions such that for any $i\neq j$,
$
E_{h \in \HH}[h_ih_j] = \frac{Y_{ij}}{C\log n}
$.
We have,
$$
E_{h \in \HH}[h_ih_j] = \frac{Y_{ij}}{C\log n}
= Z_{ij} -\left(1-\frac{1}{C\log n}\right)I_{ij}
= Z_{ij}
$$
Moreover, for every $i$, we have $E_{h \in \HH}[h_ih_i]=1=Z_{ii}$.
\end{proof}
\subsection{Generalized LSH}
In this section, we try to investigate the conditions to have Generalized LSH with high probability. 
\begin{lemma}
Let $\{x^{(1)},\dots,x^{(n)}\}$ be a set of unit vectors.
Let $Z_{ij}=\inner{x^{(i)},x^{(j)}}$ for $1\leq i,j \leq n$. 
There is a generalized $\alpha$-LSH for matrix $Z$ with $\alpha = O(\log n)$.
\end{lemma}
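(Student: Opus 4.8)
The plan is to observe that $Z$ is automatically a positive semidefinite matrix with unit diagonal, so that the lemma follows almost immediately from the machinery already established for Claim~\ref{claim:ALSHexist}. Concretely, since $Z_{ij}=\inner{x^{(i)},x^{(j)}}$ is the Gram matrix of the unit vectors $x^{(i)}$, it is positive semidefinite, and its diagonal entries are $Z_{ii}=\norm{x^{(i)}}_2^2=1$. Hence its smallest eigenvalue satisfies $\lambda_{\min}\geq 0$, and the general bound $\alpha=O((1-\lambda_{\min})\log n)$ supplied by Claim~\ref{claim:ALSHexist} collapses to $\alpha=O(\log n)$. In principle one could simply cite Claim~\ref{claim:ALSHexist} with $\lambda_{\min}\geq 0$ and stop here; below I spell out the argument directly to keep the diagonal bookkeeping explicit.

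The main step is to invoke the result of \cite{charikar04} exactly as in the proof of Claim~\ref{claim:ALSHexist}: because $Z$ has unit diagonal and is positive semidefinite, there is a probability distribution over a family $\HH$ of binary hash functions such that for all $i\neq j$,
$$
\EE_{h\in\HH}[\kappa_h(x^{(i)},x^{(j)})]=\EE_{h\in\HH}[h(x^{(i)})h(x^{(j)})]=\frac{Z_{ij}}{C\log n}
$$
for an absolute constant $C$. Rescaling by $\alpha=C\log n$ then gives $\alpha\,\EE_{h\in\HH}[\kappa_h(x^{(i)},x^{(j)})]=Z_{ij}$ off the diagonal, so no additive shift $\theta$ is needed there (take $\theta=0$).

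It remains to handle the diagonal, and this is where the extra parameter $\gamma$ in the definition of a Generalized $\alpha$-LSH does its work. Since $h$ takes values in $\{\pm 1\}$, we have $\kappa_h(x,x)=h(x)^2=1$, so $\alpha\,\EE_{h\in\HH}[\kappa_h(x,x)]=C\log n$, whereas $Z_{ii}=1$. Absorbing this discrepancy by setting $\theta=0$ and $\gamma=C\log n-1$ yields
$$
\alpha\,\EE_{h\in\HH}[\kappa_h(x,y)]=Z(x,y)+\theta+\gamma\,1_{x=y}
$$
for all $x,y$, with $\alpha=C\log n=O(\log n)$, which is exactly a Generalized $\alpha$-LSH for $Z$. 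There is essentially no obstacle here: the entire content is the observation that a Gram matrix of unit vectors is positive semidefinite with unit diagonal, forcing $\lambda_{\min}\geq 0$. The only thing worth double-checking is the sign conventions relating $\kappa_h$, $h(x)h(y)$, and the hashing probability, together with the arithmetic of the diagonal shift $\gamma$.
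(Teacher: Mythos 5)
Your proof is correct and follows essentially the same route as the paper: the paper's proof simply notes that $Z$, being a Gram matrix, is positive semidefinite so $\lambda_{\min}\geq 0$, and then invokes Claim~\ref{claim:ALSHexist}. Your version additionally unpacks the underlying \cite{charikar04} argument and makes the diagonal bookkeeping (the choice $\theta=0$, $\gamma=C\log n-1$) explicit, which the paper leaves implicit, but this is elaboration rather than a different approach.
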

\begin{proof}
Matrix $Z$ is positive semi-definite and thus its smallest eigenvalue $\lambda_{min}$ is non-negative. 
Applying Claim~\ref{claim:ALSHexist}, we get the statement of the lemma.
\end{proof}
\begin{theorem}
Let $\{x^{(1)},\dots,x^{(n)}\}$ be a set of unit vectors sampled uniformly at random from the unit sphere, let $0 <\alpha < O(\log n)$. Let $Z_{ij}=\inner{x^{(i)},x^{(j)}}$
for $1\leq i,j \leq n$. 
If $d\geq C n \log^2 n / \alpha^2$ then with probability at least $1-e^{C' n \alpha^2/\log^2 n}$, there is a generalized $\alpha$-LSH for $Z$. Here $C>0$ and $C'>0$ are some absolute constants.
\end{theorem}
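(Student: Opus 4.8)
The plan is to mirror the proof of Theorem~\ref{thm:randomLSH}, but to exploit the extra diagonal freedom $\gamma$ that a \emph{generalized} LSH provides in order to push $\alpha$ below the $O(\log n)$ bound of the preceding lemma. The entire argument will reduce to exhibiting a positive semidefinite matrix with unit diagonal, to which the construction of~\cite{charikar04} applies; let $C_0$ denote the constant so that this construction produces a hash distribution with $\EE_{h\in\HH}[h(x)h(y)] = Y(x,y)/(C_0\log n)$ for $x\neq y$ whenever $Y\succeq 0$ has unit diagonal.

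First I would reduce the existence of a generalized $\alpha$-LSH (with $\theta=0$) to a spectral condition on $Z$. Setting $\beta = C_0\log n/\alpha$, consider $Y = (1-\beta)I + \beta Z$, which has unit diagonal and off-diagonal entries $\beta Z_{ij}$. If $Y\succeq 0$, then~\cite{charikar04} yields a hash distribution with $\EE_{h\in\HH}[h(x)h(y)] = Y(x,y)/(C_0\log n) = Z(x,y)/\alpha$ for every $x\neq y$, so that $\alpha\,\EE_{h\in\HH}[\kappa_h(x,y)] = Z(x,y)$ off the diagonal; taking $\theta = 0$ and $\gamma = \alpha - 1$ to absorb the diagonal gives exactly a generalized $\alpha$-LSH. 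Because $Z$ has unit diagonal, the eigenvalues of $Y$ are $1 - \beta(1 - \lambda_i(Z))$, so $Y\succeq 0$ is equivalent to $\lambda_{\min}(Z) \geq 1 - 1/\beta = 1 - \alpha/(C_0\log n)$.

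Second I would supply this spectral condition with high probability via Lemma~\ref{lem:eig}. Choosing $t = \alpha/(2C_0\log n)$ (the factor $2$ forces strict positivity and is absorbed into constants), note $t\in(0,1)$ precisely because $\alpha < O(\log n)$. Lemma~\ref{lem:eig} then gives that, provided $d \geq C_1 n/t^2 = \Theta(n\log^2 n/\alpha^2)$, with probability at least $1 - 2e^{-C_2 t^2 n} = 1 - e^{-\Theta(n\alpha^2/\log^2 n)}$ all eigenvalues satisfy $|\lambda_i(Z) - 1|\leq t$, hence $\lambda_{\min}(Z) \geq 1 - t \geq 1 - \alpha/(C_0\log n)$. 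On this event the condition of the first step holds, so $Y\succeq 0$ and the construction produces a generalized $\alpha$-LSH. Reading off $d \geq \bigl(C_1(2C_0)^2\bigr)\,n\log^2 n/\alpha^2$ and the failure probability fixes the absolute constants $C$ and $C'$.

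There is no genuine obstacle here; the one thing to watch is the bookkeeping of constants, namely that the $t$ needed to make $Y$ positive semidefinite is the same $t$ for which Lemma~\ref{lem:eig} delivers the target dependence $d = \Theta(n\log^2 n/\alpha^2)$ and the claimed probability. I would also keep $t$ a constant factor below $\alpha/(C_0\log n)$ so that $Y$ is strictly positive definite, staying safely inside the hypotheses of~\cite{charikar04} rather than on the boundary $\beta t = 1$ where $Y$ is merely PSD.
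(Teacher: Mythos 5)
Your proposal is correct and takes essentially the same route as the paper: the paper's entire proof is the remark that the result is a ``straightforward generalization of Theorem~\ref{thm:randomLSH},'' and your argument is precisely that generalization, rescaling the identity-shift construction to $Y=(1-\beta)I+\beta Z$ with $\beta=\Theta(\log n/\alpha)$, applying Lemma~\ref{lem:eig} with $t=\Theta(\alpha/\log n)$ to get $\lambda_{\min}(Z)\geq 1-t$ with the claimed probability and dimension bound, and invoking the construction of \cite{charikar04} on the unit-diagonal PSD matrix $Y$. Your additional observation that the diagonal mismatch is absorbed by $\gamma=\alpha-1$ (with $\theta=0$) is exactly the point where the ``generalized'' LSH freedom is used, and it matches the paper's intent.
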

\begin{proof}
The proof is a straightforward generalization of Theorem~\ref{thm:randomLSH}.
\end{proof}

\end{document}